\ificcvfinal\pagestyle{empty}\fi
\newtheorem{thm}{Theorem}
\newtheorem{defi}[thm]{Definition}
\newtheorem{lem}[thm]{Lemma}
\newtheorem{cor}[thm]{Corollary}
\long\def\ignore#1{}
\def\calA{\mathcal{A}}
\def\calL{\mathcal{L}}
\def\calN{\mathcal{N}}
\def\calO{\mathcal{O}}
\newcommand{\setsub}[2]{{#1 \backslash #2}}
\newcommand{\setsubVS}{\setsub{V}{S}}
\newcommand{\energychange}[3]{\Delta E(#2 \leftarrow #1 \mid #3)}
\newcommand{\energychangedefS}{\energychange{x_S}{y_S}{z_\setsubVS}}
\newcommand{\energychangedefNS}{\energychange{x_S}{y_S}{z_{\calN(S)}}}
\def\DEE{\textsc{DEE}}
\def\Alpha{\textsc{$\alpha$-Exp}}
\def\HRP{\textsc{PR}}
\def\ProbDEE{Ours}
\begin{document}

\title{A discriminative view of MRF pre-processing algorithms}

\author{Chen Wang$^{1, 2}$ \qquad Charles Herrmann$^2$ \qquad Ramin Zabih$^{1, 2}$ \\ $^1$ Google Research \qquad $^2$ Cornell University \\
{\tt\small \{chenwang, cih, rdz\}@cs.cornell.edu}}

\maketitle
\thispagestyle{empty}

\begin{abstract}
While Markov Random Fields (MRFs) are widely used in computer vision, they present
a quite challenging inference problem. MRF inference can be accelerated
by pre-processing techniques like Dead End Elimination (DEE)
\cite{Desmet:Nature92} or QPBO-based approaches \cite{kohli2008partial,kovtun2003partial,kovtun2004image} which compute the
optimal labeling of a subset of variables. These techniques are guaranteed to
never wrongly label a variable but they often leave a large number of
variables unlabeled. We address this shortcoming by interpreting
pre-processing as a classification problem, which allows us to trade off false
positives (i.e., giving a variable an incorrect label) versus false negatives
(i.e., failing to label a variable). We describe an efficient discriminative
rule that finds optimal solutions for a subset of variables. Our
technique provides both per-instance and worst-case guarantees concerning the
quality of the solution. Empirical studies were conducted over several
benchmark datasets. We obtain a speedup factor of 2 to 12 over expansion
moves \cite{BVZ:PAMI01} without preprocessing, and on difficult non-submodular
energy functions produce slightly lower energy.
\end{abstract}

\section{Pre-processing for MRF inference}
\label{sec:intro}

We address the inference problem for pairwise {Markov Random Fields}
(MRFs) defined over $n$ variables $x = (x_1, \ldots, x_n)$, where each
$x_i$ is labeled from a discrete label set $\mathcal{L}$. The MRF can be
viewed as a graph $G=(V, E)$ with a \textit{neighborhood system} $\calN: V \mapsto 2^V$.
To compute the MAP estimate we minimize the energy
\begin{equation}
E(x) = \sum_{i \in V} \theta_i(x_i) + \sum_{(i,j) \in E}\theta_{ij}(x_i, x_j)
\end{equation}
where $\theta_i$ and
$\theta_{ij}$ are \textit{unary} terms and \textit{pairwise} terms.
MRFs are widely used in applications such as image segmentation, stereo,
etc~\cite{OpenGM:IJCV15,SZSVKATR:PAMI08}. Unfortunately the MRF inference
problem is NP-hard even when $|\calL|=2$ (i.e. binary
labels)~\cite{KZ:PAMI04}.

Many algorithms involve some kind of pre-processing phase that seeks to
determine the value of a subset of variables, thus reducing the complexity of
the remaining combinatorial search problem. Pre-processing methods are
commonly used in conjunction with graph cuts, a technique that achieves strong
performance on both binary and multilabel MRF inference
\cite{SZSVKATR:PAMI08}. Graph cuts handle binary MRFs by reduction to
min-cut, which is then solved via max-flow (see \cite{Boros:DAM02,FZ:PAMI11}
for reviews). The most widely used graph cut methods for multi-label MRFs 
are move-making techniques, which generate a new proposal at each
iteration and reduce the multi-label problem into a series of binary
subproblems (should each variable stick with the old label or switch to the
proposed label) and then solved by max-flow/min-cut. Popular
algorithms in this family include expansion moves~\cite{BVZ:PAMI01} and their
generalization to fusion moves~\cite{Lempitsky:TPAMI10}.

The best known pre-processing 
methods are Dead End Elimination (DEE) \cite{Desmet:Nature92} and QPBO
\cite{Boros:DAM02,KR:PAMI07}, but there are a number of others
\cite{kohli2008partial,kovtun2003partial,kovtun2004image,shekhovtsov2014maximum,shekhovtsov2015maximum,swoboda2014partial,WZ:CVPR16}.
(Similar approaches are used for other NP-hard problems, a
prominent example is Davis-Putnam's pure literal rule for SAT
\cite{DavisPutnam:JACM60}.)

The key weakness of such methods is that they are inherently conservative,
since they only label variables whose value can be determined in every global
minimum. Yet the MRFs that occur in computer vision are so large that in
practice we almost never compute the actual global minimum.\footnote{See
 \cite{Ishikawa:PAMI03,Weiss:ICCV05} for rare counterexamples.} As a result,
a pre-processing step that is carefully designed to never prune the global
minimum is followed by a search step that almost never finds the global
minimum. Our fundamental observation is that the pre-processing step can be
viewed as a classification problem, and that existing pre-processing methods
are designed to avoid false positives (i.e., to never label a variable
incorrectly), at the cost of many false negatives (i.e., variables that are
left unlabeled). By revisiting this tradeoff we can design techniques where
the combination of the pre-processing step and the search step leads to better
overall performance, especially on the most difficult problems.

As an example, consider a tiny 8-connected binary MRF with 9 variables
(pixels), and suppose we wish to determine by pre-processing that the center
pixel should be labeled with 0. In order to soundly compute this by DEE or
QPBO, we need to establish that switching the center pixel from 1 to 0 will
always decrease the energy, no matter what the configuration of the
surrounding pixels. Yet as demonstrated in Table~\ref{tab:mini-MRF}, there are
local configurations that are quite unlikely.

\begin{table}
  \centering
  \begin{tabular}{|c|c|c|}
\hline
  1 & 0 & 1    
\\\hline
  0 & ? & 0    
\\\hline
  1 & 0 & 1    
\\\hline
  \end{tabular}
\hspace*{.2in}
  \begin{tabular}{|c|c|c|}
\hline
  1 & 1 & 1    
\\\hline
  1 & ? & 1    
\\\hline
  1 & 1 & 1    
\\\hline
  \end{tabular}
\vspace*{.1in}
  \caption{Minimal MRF example. The left configuration is an unlikely configuration for the
    neighborhood around the center pixel in the global minimum, compared to
    the configuration at right. Existing pre-processing methods treat all
    configurations equally, and as a result fail to label many variables.}
  \label{tab:mini-MRF}
\end{table}

\subsection{Outline and contributions}

We begin with a summary of related work, with an emphasis on DEE, QPBO and
QPBO-based pre-processing techniques. In Section~\ref{sec:discrim} we give our
discriminative criterion for pre-processing, motivated by examples like
Table~\ref{tab:mini-MRF}, and provide efficient approximations for the key
subproblems. The theoretical performance of our method is analyzed in
Section~\ref{sec:bounds}, and experimental results are given in
Section~\ref{sec:experiments}. Most proofs are deferred to the supplemental
material, which also contains additional experimental results.

\section{Related work}


A popular approach to the inference problem is to find the optimal
labeling for a subset of the variables
\cite{Desmet:Nature92,kahl2012generalized,kohli2008partial,kolmogorov2010generalized,Savchynskyy13:combiLP,shekhovtsov2014maximum,shekhovtsov2015maximum,swoboda2014partial,windheuser2012generalized}. 
A partial labeling that holds in every global minimizer is said to be
\textit{persistent}~\cite{Boros:DAM02}. 
Techniques like QPBO \cite{Boros:DAM02,KR:PAMI07} find an optimal partial
labeling by enforcing an even stronger condition: a partial labeling
that will decrease the energy if it is substituted into any complete
labeling.\footnote{QPBO is naturally viewed as a pre-processing method since
it finds persistent partial labelings, and leaves the task of labeling the
remaining variables to some other algorithm.}
This stronger property is sometimes called an \textit{autarky}~\cite{Boros:DAM02},
which was generalized by \cite{shekhovtsov2014maximum}. QPBO in particular is
widely used in computer vision since it often finds the correct label for the
majority of the variables.

To make these notions precise, we introduce the following notation.
A \textit{partial labeling} $x_S$ is represented
by the subvector of $x$ indexed by $S \subseteq V$. Let $\mathcal{L}_S =
\Pi_{i \in S} \mathcal{L}$ be the label space of $x_S$. Given two partial
labelings $x_A$ and $x_B$ where $A \cap B = \emptyset$, we define $x_A
\oplus x_B$ to be the composition of $x_A$ and $x_B$.\footnote{Let $y = x_A
 \oplus x_B$ when $A \cap B = \emptyset$, then we have $y_i = (x_A)_i$ when
 $i \in A$ and $y_i = (x_B)_i$ when $i \in B$.} As an important special case,
we can write substituting a partial labeling $x_S$ into a full labeling $z$ as
$x_S \oplus z_{V \backslash S}$. 

Following~\cite{Boros:DAM02}, we can define \textit{persistency} and
\textit{autarky}:
\begin{defi}\label{defi:persistency}
A partial labeling $x_S$ is persistent if 
\begin{equation}\label{eq:persistency}
 x_S = x^*_S, \quad \forall x^* \in {\arg\min}_x E(x).
\end{equation}
\end{defi}
\begin{defi}\label{defi:autarky}
A partial labeling $x_S$ is an autarky if
  \begin{multline}\label{eq:autarky}
E(x_S \oplus z_{V \backslash S}) < E(y_S \oplus z_{V \backslash S}),
\\\textit{$\forall z_{V\backslash S} \in \mathcal{L}_{V \backslash S}$ and $\forall y_S \in \mathcal{L}_S$ where $y_S \ne x_S$}.
  \end{multline}
\end{defi}
Persistency is the key property for pre-processing, since it determines the
optimal value of a subset of the variables and thus reduces the remaining
combinatorial search problem. In general, though, checking for persistency is
intractable \cite{Boros:DAM02}.  All existing persistency algorithms appear
to check the autarky
property as a sufficient condition, which states that overwriting an arbitrary
labeling with this partial labeling will reduce the energy.

\subsection{MRF pre-processing algorithms}


QPBO generalizes the binary graph cut reduction that uses max-flow 
to find an optimal partial labeling
\cite{Boros:DAM02,KR:PAMI07,Rother:CVPR07}. If the energy function is 
submodular\footnote{For every pairwise cost, we have
  $\theta_{ij}(0,0)+\theta_{ij}(1,1) \le \theta_{ij}(0,1) +
  \theta_{ij}(1,0)$.} the partial labeling is complete (i.e., it labels
every variable and finds a global minimizer). However, the computational expense of
running max-flow is non-trivial. 

There are also techniques directly finding optimal partial labeling for the multi-label case, 
but the computational costs for these methods are
significant. Kovtun~\cite{kovtun2003partial,kovtun2004image} described an
approach constructing a series of binary one-verse-the-rest 
auxiliary problems and solve each of them via graph cuts.
MQPBO~\cite{kohli2008partial} and generalized roof
duality~\cite{windheuser2012generalized} proposed generalizations of QPBO to
multi-label MRFs.

Recently, Swoboda~et.~al.~\cite{swoboda2014partial} use standard MRF inference
algorithms to iteratively update the set of persistent
variables. Shekhovtsov~\cite{shekhovtsov2014maximum} formalized the problem to
maximize the number of optimally labeled variables as an LP. They also
proposed to combine these two approaches together which can take advantage of
both of them~\cite{shekhovtsov2015maximum}. The number of variables labeled by
these approaches are significantly more than Kovtun's approach and
MQPBO. However, the running time of these approaches is significantly longer,
since these approaches involve solving complex programming (either via
standard MRF inference solver or LP solver) iteratively.

Dead End Elimination (DEE)~\cite{Desmet:Nature92} and the recent Persistency
Relaxation (PR) algorithm \cite{WZ:CVPR16} are the only existing method
with cheaper computational costs than max-flow. DEE checks a local sufficient
condition which only involves a single vertex and its adjacent edges. PR
generalizes DEE to check a larger partial labeling, which gives improved
results on standard benchmarks.

Methods that optimally label a subset of the variables can obviously be used
to pre-process and accelerate MRF inference algorithms such as expansion moves. For example,
Radhakrishnan and Su~\cite{radhakrishnan2006dead} used DEE while
Alahari~et.~al.~\cite{alahari2010dynamic} applied Kovtun's approach.

\section{Discriminative pre-processing of MRFs}\label{sec:discrim}

In computer vision, the MRF inference problem is almost never solved
exactly. As a result, pre-processing methods that enforce soundness are far
too conservative, since they leave a large number of variables unlabeled. If we
view pre-processing as a binary classification problem (given a partial labeling
$x_S$, decide if it's persistent),
existing techniques ensure
that there are no false positives (i.e., variables given a label must be part of every global minimum), but at the cost of multiple false
negatives (i.e., variables that are left unlabeled).

\def\NS{{\calN(S)}}
\def\Ni{{\calN(i)}}
\def\energychangedefi{\energychange{x_i}{y_i}{z_\Ni}}
First, we need some notation. 
Define 
\[
\energychangedefS = E(y_S \oplus z_\setsubVS) - E(x_S \oplus
z_\setsubVS)
\]
to be the energy change when we substitute $x_S$ by $y_S$ given the partial
labeling $z_\setsubVS$ for the variables not in $S$. By expanding the
definition of $E(x)$ and cancelling terms, the Markov property of MRFs gives
us a sum over terms only depending on $x_i$, $y_i$ for $i \in S$ and $z_j$ for
$j \in V \backslash S$ with some $i \in S$ such that $(i,j) \in E$ (i.e.,
$z_j$ is adjacent to $S$). Let $\NS = \{j \in \setsubVS \mid \exists i \in
S, (i, j) \in E\}$, and we can rewrite $\energychangedefS =
\energychangedefNS$.

This allows us to
rewrite the autarky property~\eqref{eq:autarky} as: 
\begin{equation}\label{eq:autarky2}
\min_{y_S \ne x_S} \energychangedefNS > 0, \forall z_{\NS} \in \calL_{\NS}
\end{equation}

The key issue is the universal quantification in
Eq.~\ref{eq:autarky2}. To ensure that a partial labeling $x_S$
presents in all global minimizer, 
we look at all possible values that the
neighbors might have. For each of these, we check that any other assignment
$y_S$ would increase the energy. 

Yet this is obviously quite conservative. We now show the desired persistency
property can be rewritten by only looking at assignments to the neighboring
variables that occur in a global minimizer. 
Define $\calL^*_{\NS} = \{z^*_\NS \mid z^* \in
\arg\min E(z)\}$ be all possible configurations of $\NS$ in a global minimizer.
\begin{lem}
$x_S$ is persistent if and only if
\begin{equation}\label{eq:persistency2}
\min_{y_S \ne x_S} \energychangedefNS > 0, \forall
z_{\NS} \in \calL^*_{\NS}.
\end{equation}
\end{lem}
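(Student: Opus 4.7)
The plan is to prove the two directions separately, treating this as a straightforward logical equivalence once the Markov property rewrite has been established.

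For the ($\Rightarrow$) direction, I would start by assuming $x_S$ is persistent and picking an arbitrary $z_\NS \in \calL^*_\NS$. By definition of $\calL^*_\NS$, there exists a global minimizer $x^*$ with $x^*_\NS = z_\NS$. Persistency forces $x^*_S = x_S$, so the full minimizer has the form $x_S \oplus x^*_\setsubVS$. Now for any $y_S \ne x_S$, the labeling $y_S \oplus x^*_\setsubVS$ cannot be a global minimizer (otherwise persistency would give $y_S = x_S$, a contradiction), so $E(y_S \oplus x^*_\setsubVS) > E(x_S \oplus x^*_\setsubVS)$, i.e., $\energychange{x_S}{y_S}{x^*_\setsubVS} > 0$. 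Using the Markov-property rewrite stated in the excerpt, this difference equals $\energychangedefNS$ for the chosen $z_\NS$, establishing~\eqref{eq:persistency2}.

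For the ($\Leftarrow$) direction, I would argue by contradiction. Suppose~\eqref{eq:persistency2} holds but $x_S$ is not persistent; then there exists a global minimizer $x^*$ with $x^*_S \ne x_S$. Take $z_\NS := x^*_\NS$, which lies in $\calL^*_\NS$ by construction. Applying the hypothesis with this $z_\NS$ and the particular choice $y_S := x^*_S \ne x_S$, together with the Markov-property identity, yields
\[
E(x^*_S \oplus x^*_\setsubVS) - E(x_S \oplus x^*_\setsubVS) > 0,
\]
i.e., $E(x^*) > E(x_S \oplus x^*_\setsubVS)$. This contradicts global optimality of $x^*$, so $x^*_S = x_S$ for every global minimizer and $x_S$ is persistent.

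The only subtle point, which I would flag explicitly but which is already given for free by the excerpt, is the reduction from $\energychangedefS$ to $\energychangedefNS$: the difference of the two full energies collapses to a sum over terms touching $S$, so only the neighboring assignment $z_\NS$ (and not the full $z_\setsubVS$) matters. Once this is in hand, both directions are essentially one-line arguments, so I do not expect a real obstacle; the main thing to get right is choosing $y_S = x^*_S$ in the reverse direction so that the energy inequality turns into a contradiction with global optimality.
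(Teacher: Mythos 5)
Your proposal is correct and follows essentially the same route as the paper: both directions extend a neighbor configuration $z_{\calN(S)}$ to a full global minimizer and compare energies, using the Markov-property reduction to justify working with $z_{\calN(S)}$ alone. The only cosmetic difference is that you argue the persistency$\Rightarrow$condition direction directly (a non-minimizer must have energy strictly above the minimum), whereas the paper phrases it contrapositively (a non-positive energy change would produce a second global minimizer disagreeing with $x_S$), which is logically the same argument.
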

\begin{proof}
The if direction is trivial: consider an arbitrary global minimizer $z^*$, we have
$z^*_{\NS} \in \calL^*_{\NS}$ by definition. Suppose $x_S \ne z^*_S$, we will
have $E(x_S \oplus z^*_{V \backslash S}) < E(z^*)$, which contradicts the
assumption that
$z^*$ is
a minimizer. Therefore, we have $x_S = z^*_S, \forall z^*$, so it is
persistent.

For the only if direction, suppose Eq.~\ref{eq:persistency2} is not true, then
$\exists z_{\NS} \in \calL^*_{\NS}, \exists y_S \ne x_S$ such that
$\energychangedefNS \le 0$. We can expand $z_{\NS}$ to one minimizer $z^*$
such that $z^*_{\NS} = z_{\NS}$. Since $x_S$ is persistent, we also know
$z^*_S = x_S$. Therefore, $E(y_S \oplus z^*_{V \backslash S}) \le E(x_S
\oplus z^*_{V \backslash S}) = E(z^*)$. Since $z^*$ is a minimum this
inequality is an equality, hence $y_S \oplus z^*_{V \backslash S}$ is also a
global minimum. This contradicts the assumption that $x_S$ is persistent, since $y_S
\ne x_S$. 
\end{proof}

\subsection{Discriminative criterion}
Comparing Eq.~\ref{eq:autarky2} and Eq.~\ref{eq:persistency2}, we 
immediately observe that the universal quantifier makes autarky a
sound but stronger condition than persistency. Crucially, this suggests a
discriminative criterion to trade off false positives against false negatives. 

The high level idea is the following. Let $\hat \calL_{\NS}(x_S) = \{z_{\NS} \in
\calL_{\NS} \mid \min_{y_S \ne x_S} \energychangedefNS > 0\}$ be the set of
neighbor configurations $z_{\NS}$
such that given them $x_S$ is always a
better choice. When $\hat \calL_{\NS}(x_S)$ is large enough or covers the most
important neighbor configurations, it's very likely that we will
have $\calL^*_{\NS} \subseteq \hat \calL_{\NS}(x_S)$. This in turn implies $x_S$ is
persistent, even though $\hat \calL_{\NS}(x_S) \ne \calL_{\NS}$ and we do not
precisely know $\calL^*_{\NS}$.

Formally, assume we have a ground truth distribution $p(z_{\NS})$ which is
uniform over $\calL^*_{\NS}$ and $0$ otherwise. Then a sound condition to
check persistency is $\sum_{z_{\NS} \in \hat \calL_{\NS}(x_S)} p(z_{\NS}) = 1$. Of
course, computing $\calL^*_{\NS}$ and $p(z_{\NS})$ is computationally
intractable. So we use an estimated distribution $q(z_{\NS})$ that
approximates $p(z_{\NS})$. Looking back to Table~\ref{tab:mini-MRF}, one would
assume that the left configuration would not appear in $Z^*_{\NS}$, while the
right one quite plausibly could; there should be a lower $q$ value for the
left one but a higher $q$ value for the right.
Our discriminative criterion for persistency is
\begin{equation}\label{eq:decisionrule}
\sum_{z_{\NS} \in \hat \calL_{\NS}(x_S)} q(z_{\NS}) \ge \kappa.
\end{equation}
Here $\kappa \in [0, 1]$ is the key parameter that controls the tradeoff
between false positives and false negatives, as shown by the following
(obvious) lemma.
\begin{lem}
For the same set of decision problems for persistency, we will never increase
the number of false positives by increasing $\kappa$.
\end{lem}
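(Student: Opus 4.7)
The plan is to reduce the statement to a monotonicity property of a fixed scoring function under a varying threshold. First I would isolate the $\kappa$-independent quantity
$s(x_S) := \sum_{z_{\NS} \in \hat\calL_{\NS}(x_S)} q(z_{\NS})$,
so that the rule in Eq.~\ref{eq:decisionrule} simply reads $s(x_S) \ge \kappa$. Since neither $\hat\calL_{\NS}(x_S)$ nor $q$ depends on $\kappa$, each decision instance has a single score whose value is determined before $\kappa$ is chosen.

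Next I would observe that the accept set is monotone in $\kappa$. For two thresholds $\kappa \le \kappa'$, the implication $s(x_S) \ge \kappa' \Rightarrow s(x_S) \ge \kappa$ is just transitivity of $\ge$, so the set of instances declared persistent at threshold $\kappa'$ is contained in the set declared persistent at threshold $\kappa$.

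Finally I would transfer this containment to the false-positive counts. A false positive is an instance in which the rule declares $x_S$ persistent but $x_S$ fails Definition~\ref{defi:persistency}. Persistency is a property of $x_S$ and the underlying MRF alone, so the collection of non-persistent instances is identical at both thresholds. Intersecting with that fixed collection preserves the containment from the previous step, hence the set of false positives at $\kappa'$ is a subset of the set of false positives at $\kappa$, and its cardinality cannot be larger.

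The only real ``obstacle'' here is a scoping one: we must compare false-positive counts over the \emph{same} collection of decision instances, with the same estimated distribution $q$ and the same MRF, so that the only thing varying between the two sides of the comparison is $\kappa$ itself. Once that scope is pinned down, the argument is a one-line monotonicity check and no energy-based reasoning is needed.
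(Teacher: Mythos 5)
Your proposal is correct and matches the paper's own argument: both reduce the claim to the observation that the score $\sum_{z_{\calN(S)} \in \hat\calL_{\calN(S)}(x_S)} q(z_{\calN(S)})$ is independent of $\kappa$, so any non-persistent $x_S$ accepted at the larger threshold is also accepted at the smaller one, making the false-positive set at the larger $\kappa$ a subset of that at the smaller $\kappa$. Your phrasing via set containment and intersection with the fixed set of non-persistent instances is just a slightly more explicit write-up of the same one-line monotonicity check.
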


We now address the two crucial issues: how to choose $q$ to effectively
approximate $p$, and how to efficiently check Eq.~\ref{eq:decisionrule}.

\subsection{Approximating $p$}\label{sec:approxprob}
A trivial baseline is to treat each $z_\NS$ as equally important and set our
approximation $q(z_\NS)$ to be the uniform distribution over $\calL_\NS$. In
this special case, Eq.~\ref{eq:decisionrule} is equivalent to count
the number of neighbor configurations $z_\NS$ that satisfy $\min_{y_S \ne
 x_S} \energychangedefNS > 0$. We expect $\hat \calL_\NS(x_S)$ to cover the
unknown $\calL^*_\NS$ with high probability when $|\hat \calL_\NS(x_S)|$ is large
enough.

A more elegant approach is to estimate the marginal probability of a
particular assignment $z_\NS$ via the generative MRF model, and use this as
our approximation for $p$. This problem is well studied in the message
passing literature, and is often solved by max-product loopy belief
propagation (LBP) \cite{Murphy12,WF:TIT01}. 
An important special case is if we only use the initialization of LBP,
$q_i(z_i) \propto e^{-\theta_i(z_i)}$. This makes a certain amount of
intuitive sense: in the MRF energy functions that occur in computer
vision it is well known that most of the weight comes from the unary terms 
\cite{Murphy12},
which provide a strong signal as to the optimal label for each variable.

More generally, we can define $q(z_{\NS})$ to be a fully independent
distribution $q(z_{\NS}) = \Pi_{i \in \NS} q_i(z_i)$ with
$q_i(z_i) \propto e^{-\theta_i(z_i))}\Pi_{j \in \calN(i)} m_{j \rightarrow
 i}(z_i)$, where $m_{j \rightarrow i}(z_i)$ is the message we have from the
belief propagation algorithm. Since this is just an approximation, we would
not need to pay the cost of running LBP to convergence. In our experiments,
the more general approach does not seem to pay dividends, but other ways of
estimating the marginals are worth investigating.

\subsection{Efficiently checking our discriminative criterion}\label{sec:approxsum}
Checking Eq.~\ref{eq:decisionrule} is generally computational intractable,
due to the size of $\calL_\NS(x_S)$ and $\{y_S \in \calL_S \mid y_S \ne x_S\}$.
We now propose a polynomial time algorithm to compute a lower bound for
$\sum_{z_{\NS} \in \hat \calL_{\NS}(x_S)} q(z_{\NS})$. 

We will focus on the persistency of a single variable $x_i$ from this point
forward. This subroutine is used by our construction algorithm (which will be
described in Section~\ref{sec:algorithm}) to construct a partial
labeling for the given energy function $E(x)$. However, our methods can handle
an arbitrary $x_S$ for $|S| > 1$; the details are deferred to the
supplementary material, but are similar to the single variable case.

Our general strategy is to find a subset of $\calL_\Ni$ which we know is inside $\hat \calL_\Ni(x_i)$ and can be easily factorized. We start by considering each node $j \in \Ni$ independently. For each $j$, define $\calA_j$ to be the set of labels $\ell$ where the autarky condition holds if $z_j = \ell$. Since autarky is a stronger condition than persistency, we know that all $z_\Ni$ values where $z_j\in \calA_j$ are inside $\hat \calL_\Ni(x_i)$. The union of these sets across different $j\in \Ni$ will still be a subset of $\hat \calL_\Ni(x_i)$.

Formally, define $\calL_\Ni^{z_j = \ell} = \{z_\Ni \mid z_j = \ell\}$. Then $\calA_j = \{ \ell \mid \min_{y_i \ne x_i} \energychangedefi > 0, \forall z_\Ni \in \calL_\Ni^{z_j = \ell}\}$. Let $\calL_\Ni^{z_j\in\calA_j} = \cup_{\ell \in \calA_j} \calL_\Ni^{z_j = \ell}$. Then, we know that $\calL_\Ni^{z_j \in \calA_j} \subseteq \hat \calL_\Ni(x_i)$ and $\cup_{j \in \Ni}\calL_\Ni^{z_j \in \calA_j} \subseteq \hat \calL_\Ni(x_i)$.

We establish a computationally tractable lower bound for $\sum_{z_{\Ni} \in
 \hat \calL_{\Ni}(x_i)} q(z_{\Ni})$ by the following lemma, which we can check 
instead. 

\begin{lem}\label{lem:approxsum}
We have the following lower bound:
\begin{equation}
\sum_{j \in \Ni} Q_j \Pi_{k \in \Ni, k \prec j}(1 - Q_k) \le \sum_{z_{\Ni} \in \hat \calL_{\Ni}(x_i)} q(z_{\Ni}),
\end{equation}
where $Q_i = \sum_{\ell \in \calA_i} q_i(z_i = \ell)$.
\end{lem}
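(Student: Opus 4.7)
The plan is to interpret $q$ as a probability measure on $\calL_\Ni$ and then recognize the LHS as the exact $q$-measure of the union $\bigcup_{j \in \Ni} \calL_\Ni^{z_j \in \calA_j}$, which the discussion preceding the lemma has already shown is contained in $\hat\calL_\Ni(x_i)$. Monotonicity of measure will then give the inequality.

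More precisely, first I would note that since $\bigcup_{j \in \Ni} \calL_\Ni^{z_j \in \calA_j} \subseteq \hat\calL_\Ni(x_i)$ (established just before the lemma statement), it suffices to show
\[
\sum_{j \in \Ni} Q_j \prod_{k \in \Ni, k \prec j}(1 - Q_k) \;=\; \sum_{z_\Ni \in \bigcup_j \calL_\Ni^{z_j \in \calA_j}} q(z_\Ni).
\]
So the task reduces to computing the $q$-mass of the union exactly.

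Next I would exploit the product structure $q(z_\Ni) = \prod_{i \in \Ni} q_i(z_i)$. Define the event $A_j = \calL_\Ni^{z_j \in \calA_j}$, which depends only on the $j$-th coordinate. Because $q$ is a fully independent distribution, the family $\{A_j\}_{j \in \Ni}$ is mutually independent under $q$, and $q(A_j) = \sum_{\ell \in \calA_j} q_j(\ell) = Q_j$, while $q(A_j^c) = 1 - Q_j$. The key move is then to partition $\bigcup_j A_j$ by the $\prec$-minimum index $j$ for which $z_\Ni$ lies in $A_j$:
\[
\bigcup_{j \in \Ni} A_j \;=\; \bigsqcup_{j \in \Ni} \Bigl(A_j \cap \bigcap_{k \prec j} A_k^c\Bigr).
\]
Applying independence to each disjoint piece gives $q\bigl(A_j \cap \bigcap_{k \prec j} A_k^c\bigr) = Q_j \prod_{k \prec j}(1 - Q_k)$, and summing over $j$ reproduces the LHS exactly.

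There is no real obstacle: the only subtle point is recognizing that the single-coordinate events $A_j$ are independent under the product measure $q$, which is precisely what makes the union's measure telescope into the stated form rather than requiring a Bonferroni-style inequality. If $q$ were a general (non-factorized) distribution one would instead need a union bound $\sum_j Q_j$ that is typically weaker; the ordering $\prec$ and the $(1-Q_k)$ factors are the ``deflation'' produced by independence. I would finish by noting that the choice of $\prec$ is arbitrary, so in principle one can tighten the bound by optimizing over orderings (e.g., sorting $Q_j$ in decreasing order), though that is not needed for the stated inequality.
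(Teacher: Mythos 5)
Your proof is correct and follows essentially the same route as the paper: both interpret $q$ as a product probability measure, use the containment $\cup_{j \in \Ni} \calL_\Ni^{z_j\in\calA_j} \subseteq \hat\calL_\Ni(x_i)$ established before the lemma, and compute the measure of the union exactly by splitting it according to the first (in $\prec$-order) coordinate event that occurs and invoking independence of the single-coordinate events. Your version merely makes explicit, as a disjoint partition $A_j \cap \bigcap_{k \prec j} A_k^c$, what the paper writes as the sequential expansion $Pr(z_{j_1}\in\calA_{j_1}) + Pr(z_{j_2}\in\calA_{j_2})Pr(z_{j_1}\notin\calA_{j_1}) + \cdots$.
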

\begin{proof}

We can view $\sum_{z_{\Ni} \in \calL'_\Ni} q(z_{\Ni})$ as the probability
$Pr(z_{\Ni} \in \calL'_\Ni)$ given distribution $q$. 

Because our $q(z_\Ni)$ can be factorized independently, we can
integrate over the variables other than $z_j$ to get 
$Pr(z_{\Ni} \in
\calL_\Ni^{z_j = \ell}) = Pr(z_j = \ell) = q_j(z_j = \ell)$. 

We also have $Pr(z_{\Ni} \in \calL_\Ni^{z_j\in \calA_j}) = Pr(z_j \in
\calA_j) = \sum_{\ell \in \calA_j} q_j(z_j = \ell) = Q_j$ since
$\calL_\Ni^{z_j = \ell}$ are all disjoint. Then, using independence again, we have
\begin{equation}\label{eq:fastcheck}
\begin{aligned}
 & \sum_{z_{\Ni} \in \cup_{j \in \Ni} \calL_\Ni^{z_j\in \calA_j}} q(z_{\Ni}) \\
= & Pr\Big(z_{\Ni} \in \cup_{j \in \Ni} \calL_\Ni^{z_j\in \calA_j}\Big) \\
= & Pr\Big(\cup_{j \in \Ni} \big(z_{\Ni} \in \calL_\Ni^{z_j\in \calA_j}\big)\Big) \\
= & Pr\Big(\cup_{j \in \Ni} \big( z_j \in \calA_j \big)\Big) \\
= & Pr(z_{j_1} \in \calA_{j_1}) + Pr(z_{j_2} \in \calA_{j_2})Pr(z_{j_1} \not\in \calA_{j_1}) \cdots\\
= & \sum_{j \in \Ni} Q_j \Pi_{k \in \Ni, k \prec j}(1 - Q_k) 
\end{aligned}
\end{equation}
Finally, note that we argued $\cup_{j \in \Ni} \calL_\Ni^{z_j\in \calA_j} \subseteq
\hat \calL_\Ni(x_i)$ before, which concludes the proof. 
\end{proof}

Constructing $\calA_j$ requires us to be able to efficiently check 
$\min_{y_i \ne x_i} \energychangedefi > 0, \forall z_\Ni \in
\calL_\Ni^{z_j = \ell}$. We 
expand it by the definition of $E(x)$ then swap the min and sum
operators. This gives the following lower bound, which we check for being 
strictly positive: 
\begin{equation}\label{eq:fastDEE}
\begin{aligned}
& \min_{y_i \ne x_i} \big(\theta_i(y_i) - \theta_i(x_i)\big) + \min_{y_i \ne x_i}\big(\theta_{ij}(y_i, \ell) - \theta_{ij}(x_i, \ell)\big)\\ 
& + \sum_{k \in \Ni, k \ne j} \min_{z_k, y_i \ne x_i} \big(\theta_{ij}(y_i, z_k) - \theta_{ij}(x_i, z_k)\big) > 0
\end{aligned}
\end{equation}

\subsection{Our algorithm}\label{sec:algorithm}

\RestyleAlgo{boxruled}
\LinesNumbered
\begin{algorithm}\label{alg:construction} 
 \SetAlgoNoLine
 \KwIn{Energy function $E(x)$}
 $\hat x \gets \emptyset$; \quad $S \gets \emptyset$\;
 \For{$t \gets 1$ \textbf{to} $\tau$}{
 	\For{$i \in V \backslash S, \ell \in \calL_i$}{
		Compute $LB \le\sum_{z_\Ni \in \hat \calL_\Ni(x_i = \ell)} q(z_\Ni)$\;
		\If{$LB \ge \kappa$}{
			$\hat x \gets \hat x \oplus \{x_i = \ell\}$\;
			$\calL_i \gets \{\ell\}$; \quad $S \gets S \cup \{i\}$\;
		}
 	}
 }
 With $\hat x_S$ fixed, use one MRF inference algorithm to solve the remaining variables, get $\hat x_{V \backslash S}$\;
 \Return $\hat x = \hat x_S \oplus \hat x_{V \backslash S}$\;
 \caption{MRF inference with pre-processing}
\end{algorithm}

We have presented our discriminative criterion to decide if a given partial
labeling $x_i = \ell$ is persistent. Now we will use it as a key
subroutine to perform pre-processing for MRF inference, as shown in
lines 2-10 of Algorithm~\ref{alg:construction}. 
We firstly loop over the unlabeled variables and its label set (line 3). 
For each given $x_i = \ell$, use our discriminative rule to judge if it's
persistent (line 4-8). We will fix its value if it passes
our test by setting $\calL_i = \{\ell\}$, and concatenate it with our
inference result $\hat x$ (line 6, 7). Note that fixing $x_i = \ell$ will
also provide additional information as to the unlabeled variables which were
checked before $x_i$, so we repeat the whole procedure for $\tau$
iterations (line 2). 


After our pre-processing has
terminated and labeled the variables in the set $S$, we fix the variables
$\hat x_S$ and use any MRF inference algorithms to solve the remaining energy
minimization problem, which gives us a labeling $\hat x_{V \backslash S}$ on
the remaining variables (line 11). Finally, we obtain our inference result by
concatenating them together (line 12).

\noindent\textbf{Running time} We now give an asymptotic bound on the running time of
our pre-processing algorithm here, deferring the analysis into the
supplementary material. Assuming we have an oracle to give us data terms
$\theta_i(x_i)$ and prior terms $\theta_{ij}(x_i, x_j)$ in $\calO(1)$
time. Let $N = |V|, M = |E|$ and $L = \max_{i} |\calL_i|$ be the number of
variables, edges and maximum possible labels, and $d = \max_i |\Ni|$ be the
maximum degree of the graph. The overall running time is 
$\calO(d^2NL^2 +EL^2)$ when we use Section~\ref{sec:approxsum} to check our
discriminative criterion, and $\calO(dNL^{d+2})$ for brute force (which is
feasible when both $d$ and $L$ are small constants).

\section{Performance bounds}\label{sec:bounds}

We can analyze the per-instance and worst-case performance of our
pre-processing methods when followed by an inference algorithm that produces a
solution with performance bounds.

\subsection{Per-instance bounds}\label{sec:perinstancebound}

There are a number of MRF inference algorithms that produce per-instance
guarantees (i.e., they produce a certificate after execution that their
solution is close to the global minimum). These methods, which are typically
based on linear programming, include
\cite{Kolmogorov:TRWS06,Komodakis:PAMI07,Wainwright:TRW05}, and they
provide a per-instance 
additive error bound by computing the duality gap. 

Our algorithm has a natural way to bound additive errors. Recall our notation
$\energychangedefi$ describing the energy changes when we flip $x_i$ to $y_i$ with
the neighbor configuration $z_\Ni$. Therefore, 
$\min_{z_\Ni} \min_{y_i} \energychangedefi \le 0$ 
is the worst case energy decrement when we flip $x_i$
to arbitrary $y_i$ with arbitrary neighbor configurations $z_\Ni$. It's
non-positive since we can always set $y_i = x_i$. Now we can negate it and
define $\delta_i \triangleq -\min_{z_\Ni} \min_{y_i} \energychangedefi$ to be
the maximum potential energy loss when we use our discriminative criterion to
decide $x_i$ is persistent. Then we have the following two lemmas. 

\begin{lem}\label{lem:additivelemma}
Let $\hat x_S$ be the persistent variables found by our
Algorithm~\ref{alg:construction}. For arbitrary $\hat x_{V \backslash S}$,
and arbitrary $x'_S$, we have 
$E(\hat x_S \oplus \hat x_{V 
  \backslash S}) \le E(x'_S \oplus \hat x_{V \backslash S}) + \sum_{i \in S}
\delta_i$. 
\end{lem}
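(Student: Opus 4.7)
The plan is to prove this by a telescoping argument that flips the variables in $S$ one at a time, moving from $x'_S \oplus \hat x_{V \backslash S}$ to $\hat x_S \oplus \hat x_{V \backslash S}$, and bounding each single-variable flip by the corresponding $\delta_i$.

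First I would unpack the definition: since $\delta_i = -\min_{z_\Ni, y_i} \energychange{\hat x_i}{y_i}{z_\Ni}$ (where $\hat x_i$ is the label our algorithm fixed), the equivalent statement is that
\begin{equation*}
E(\hat x_i \oplus z_\Ni \oplus w) - E(y_i \oplus z_\Ni \oplus w) \le \delta_i
\end{equation*}
for every neighbor configuration $z_\Ni$, every alternative label $y_i$, and every assignment $w$ to the remaining variables (the last factor is inert by the Markov property).

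Next I would enumerate $S = \{i_1, \ldots, i_k\}$ in any fixed order and define intermediate labelings $w^{(0)}, w^{(1)}, \ldots, w^{(k)}$ by setting $w^{(0)} = x'_S \oplus \hat x_{V \backslash S}$ and obtaining $w^{(t)}$ from $w^{(t-1)}$ by overwriting coordinate $i_t$ with $\hat x_{i_t}$. Then $w^{(k)} = \hat x_S \oplus \hat x_{V \backslash S}$. At step $t$ only the variable $i_t$ changes, so the energy difference $E(w^{(t)}) - E(w^{(t-1)})$ involves only the unary term at $i_t$ and the pairwise terms incident to $i_t$, i.e.\ it depends only on $w^{(t-1)}_{\Ni_t}$ and the switch from $x'_{i_t}$ to $\hat x_{i_t}$. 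Applying the inequality above with $y_i = x'_{i_t}$ and $z_\Ni = w^{(t-1)}_{\Ni_t}$ yields $E(w^{(t)}) - E(w^{(t-1)}) \le \delta_{i_t}$.

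Finally I would telescope: summing over $t = 1, \ldots, k$ gives $E(w^{(k)}) - E(w^{(0)}) \le \sum_{i \in S} \delta_i$, which is exactly the claimed bound. There is no real obstacle here; the only subtlety worth flagging is that $\delta_i$ is defined as a max over \emph{all} neighbor configurations $z_\Ni$, so it validly bounds the flip cost at every intermediate labeling $w^{(t-1)}$ regardless of what $x'_S$ or the order of flips happens to be. This also makes clear that the bound holds uniformly in $\hat x_{V \backslash S}$ and $x'_S$, as the statement requires.
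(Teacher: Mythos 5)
Your proof is correct and is essentially the paper's argument: a one-variable-at-a-time flip between $x'_S \oplus \hat x_{V\backslash S}$ and $\hat x_S \oplus \hat x_{V\backslash S}$, bounding each flip by $\delta_i$ and telescoping. The only cosmetic difference is that the paper traverses in the opposite direction and in the reverse order of insertion into $S$, whereas you correctly observe that, since $\delta_i$ is defined as a maximum over \emph{all} neighbor configurations $z_{\calN(i)}$, the bound applies at every intermediate labeling and the order of flips is immaterial.
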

\begin{proof}
With $\hat x_{V \backslash S}$ fixed, we flip $\hat x_i$ to $x'_i$ in the
reverse order of them being added to $S$ by our algorithm. Due to the
analysis before, we will lose at most $\delta_i$ at each step. 
\end{proof}

\begin{thm}\label{lem:peradditivebound}
Suppose the inference algorithm has per-instance $\zeta$-additive bound,
then $E(\hat x) \le E(x^*) + \zeta + \sum_{i \in S} \delta_i$. 
\end{thm}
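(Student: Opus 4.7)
The plan is a two-step sandwiching argument that threads the hybrid labeling $\hat x_S \oplus x^*_{V\setminus S}$ between $\hat x = \hat x_S \oplus \hat x_{V\setminus S}$ and the true minimum $x^*$, using the inference algorithm's guarantee on one side and Lemma~\ref{lem:additivelemma} on the other.

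First, I would unpack what the per-instance $\zeta$-additive bound buys us. After pre-processing fixes $\hat x_S$, Algorithm~\ref{alg:construction} (line 11) runs the inference algorithm on the residual MRF over $V \setminus S$ with $\hat x_S$ held constant. The $\zeta$-additive guarantee applied to this reduced problem yields
\begin{equation*}
E(\hat x_S \oplus \hat x_{V\setminus S}) \;\le\; \min_{y_{V\setminus S} \in \calL_{V\setminus S}} E(\hat x_S \oplus y_{V\setminus S}) + \zeta,
\end{equation*}
and in particular, specializing $y_{V\setminus S} = x^*_{V\setminus S}$, we get
\begin{equation*}
E(\hat x_S \oplus \hat x_{V\setminus S}) \;\le\; E(\hat x_S \oplus x^*_{V\setminus S}) + \zeta.
\end{equation*}

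Second, I would invoke Lemma~\ref{lem:additivelemma} to convert the hybrid labeling $\hat x_S \oplus x^*_{V\setminus S}$ into the true optimum. The lemma is stated for \emph{arbitrary} $\hat x_{V\setminus S}$ and $x'_S$, so I can apply it with the lemma's ``$\hat x_{V\setminus S}$'' instantiated as $x^*_{V\setminus S}$ and its ``$x'_S$'' instantiated as $x^*_S$. This yields
\begin{equation*}
E(\hat x_S \oplus x^*_{V\setminus S}) \;\le\; E(x^*_S \oplus x^*_{V\setminus S}) + \sum_{i \in S} \delta_i \;=\; E(x^*) + \sum_{i \in S} \delta_i.
\end{equation*}
Chaining the two displayed inequalities gives exactly $E(\hat x) \le E(x^*) + \zeta + \sum_{i \in S}\delta_i$.

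There is no real obstacle here once one parses the quantifiers correctly; the theorem is essentially a bookkeeping corollary that cleanly decomposes the total suboptimality into two sources: the $\zeta$ slack from the downstream inference algorithm solving the residual problem, and the cumulative $\delta_i$ slack from each potentially-incorrect persistency decision our pre-processing makes. The only subtle point worth flagging in the write-up is that the per-instance bound must be interpreted with respect to the \emph{reduced} energy (with $\hat x_S$ clamped), not the original energy — but since clamping only removes candidate labelings, any $\zeta$-additive bound on the original problem carries over.
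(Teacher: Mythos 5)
Your proof is correct and follows essentially the same route as the paper's: the paper likewise chains the $\zeta$-additive guarantee on the reduced problem (via the reduced minimizer $\bar x_{V\backslash S}$, which you fold into a single $\min$) with Lemma~\ref{lem:additivelemma} applied at $x^*_{V\backslash S}$ and $x'_S = x^*_S$. No substantive differences.
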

\begin{proof}
Let $\bar x_{V \backslash S}$ be the minimizer of $E(x)$ with $\hat x_S$
fixed, which might be different than the global minimizer $x^*_{V \backslash
 S}$. Then we will have 
$E(\hat x_S \oplus \hat x_{V \backslash S}) \le 
E(\hat x_S \oplus \bar x_{V \backslash S}) + \zeta \le E(\hat x_S \oplus
x^*_{V \backslash S}) + \zeta \le E(x^*_S \oplus x^*_{V \backslash S}) + \zeta
+ \sum_{i \in S}\delta_i$. 
The first step is because we use an inference
algorithm with $\zeta$-additive errors to solve the problem with $\hat x_S$
fixed. The second step follows because $\bar x_{V \backslash S}$ is the minimizer
w.r.t. $\hat x_S$.
\end{proof}

As a special case, any sound condition like Eq.~\ref{eq:autarky2}
guarantees $\delta_i = 0$, i.e., we don't make mistakes. In practice it is
computationally intractable to compute $\delta_i$, so just as in
Section~\ref{sec:approxsum} we swap the min and sum operators, and compute the
upper bound $\bar \delta_i \ge \delta_i$ efficiently. Then 
we use $\sum_{i \in S} \bar \delta_i$ as our per-instance additive bound.

\subsection{Worst case bounds}\label{sec:worstcasebound}

Some MRF inference algorithms produce a solution that is guaranteed to lie
within a known factor of the global minimum. The best known such technique is
the expansion move algorithm \cite{BVZ:PAMI01} but there are others
\cite{gould2009alphabet,KT:JACM02,Komodakis:PAMI07}. We can easily turn
our per-instance bounds into the worst case bounds. We combine
Eq.~\ref{eq:decisionrule} and $\bar \delta_i \le \epsilon$ as our
discriminative criterion for pre-defined $\epsilon$.

\begin{cor}\label{lem:worstadditivebound}
Suppose the inference algorithm has worst case $\zeta$-additive bound,
then $E(\hat x) \le E(x^*) + \zeta + |S|\epsilon$ is our worst
case additive bound. 
\end{cor}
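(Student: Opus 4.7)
The plan is to obtain the corollary as a short combination of Theorem~\ref{lem:peradditivebound} with the modified discriminative criterion described just above the statement. A worst-case $\zeta$-additive bound is in particular a per-instance $\zeta$-additive bound on every input, so Theorem~\ref{lem:peradditivebound} applies unchanged and yields
\[
E(\hat x) \;\le\; E(x^*) + \zeta + \sum_{i \in S} \delta_i.
\]

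Next I would control the sum $\sum_{i \in S} \delta_i$ using the augmented acceptance rule. By construction, our modified criterion only admits $x_i = \ell$ into $\hat x_S$ if both Eq.~\ref{eq:decisionrule} holds and $\bar\delta_i \le \epsilon$. Since $\bar\delta_i$ was defined as a tractable upper bound on $\delta_i$ (obtained by swapping min and sum exactly as in Section~\ref{sec:approxsum}), we have $\delta_i \le \bar\delta_i \le \epsilon$ for every $i \in S$. Summing over $S$ gives $\sum_{i \in S} \delta_i \le |S|\epsilon$, and substituting into the inequality above yields the claimed bound $E(\hat x) \le E(x^*) + \zeta + |S|\epsilon$.

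There is no real obstacle here: the work has already been done in Lemma~\ref{lem:additivelemma} and Theorem~\ref{lem:peradditivebound}, and the only new ingredient is the uniform per-variable cap $\bar\delta_i \le \epsilon$ enforced at admission time. The one point I would be careful to state explicitly is that $|S|$ is data-dependent (it is the size of the partial labeling actually produced on the given instance), so the bound is worst-case in $\zeta$ but instance-dependent in the second term; this matches the intended reading, since $\epsilon$ is what the user pre-specifies and $|S|$ measures how aggressively the preprocessor could fire under that budget.
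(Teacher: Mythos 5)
Your proposal is correct and matches the paper's intended derivation: the corollary follows immediately from Theorem~\ref{lem:peradditivebound} once the augmented acceptance rule $\bar\delta_i \le \epsilon$ (with $\delta_i \le \bar\delta_i$) gives $\sum_{i \in S}\delta_i \le |S|\epsilon$, which is exactly why the paper states it as a corollary without a separate proof. Your remark that $|S|$ is instance-dependent is a fair caveat but does not affect the argument.
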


Inference algorithm with worst case guarantees are usually multiplicative
bounds other than additive bounds, but we can modify our proof of
Theorem~\ref{lem:peradditivebound} to get the following bounds.

\begin{thm}\label{lem:worstmultbound}
Suppose the inference algorithm has a worst case $\beta$-multiplicative bound,
then we will have $E(\hat x) \le \beta\cdot E(x^*) + \beta\cdot |S|\epsilon$.
\end{thm}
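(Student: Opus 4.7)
The plan is to mirror the proof of Theorem~\ref{lem:peradditivebound} almost verbatim, replacing the $\zeta$-additive inequality for the inference algorithm with a $\beta$-multiplicative one, and then observing that the $\sum_{i \in S} \delta_i$ term (now bounded by $|S|\epsilon$ via our discriminative criterion) gets multiplied through by $\beta$ at the end.

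Concretely, first I would introduce $\bar x_{V \setminus S}$ as the exact minimizer of $E(\hat x_S \oplus \cdot)$ over the unlabeled variables, exactly as in the proof of Theorem~\ref{lem:peradditivebound}. Applying the worst-case $\beta$-multiplicative guarantee of the underlying inference algorithm (run on the reduced problem with $\hat x_S$ fixed) gives $E(\hat x_S \oplus \hat x_{V \setminus S}) \le \beta \cdot E(\hat x_S \oplus \bar x_{V \setminus S})$. By optimality of $\bar x_{V \setminus S}$, this is at most $\beta \cdot E(\hat x_S \oplus x^*_{V \setminus S})$.

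Next I would invoke Lemma~\ref{lem:additivelemma} with $x'_S = x^*_S$ and $\hat x_{V \setminus S} := x^*_{V \setminus S}$, which yields $E(\hat x_S \oplus x^*_{V \setminus S}) \le E(x^*_S \oplus x^*_{V \setminus S}) + \sum_{i \in S} \delta_i = E(x^*) + \sum_{i \in S}\delta_i$. Since our strengthened discriminative criterion enforces $\bar\delta_i \le \epsilon$ and $\delta_i \le \bar\delta_i$ for every $i \in S$, we have $\sum_{i \in S}\delta_i \le |S|\epsilon$. Chaining these inequalities gives $E(\hat x) \le \beta \cdot \bigl(E(x^*) + |S|\epsilon\bigr) = \beta \cdot E(x^*) + \beta \cdot |S|\epsilon$, which is the claimed bound.

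The main subtlety, and the one potential obstacle, is the ordering of the two relaxations: the multiplicative bound has to be applied \emph{before} the additive slack introduced by Lemma~\ref{lem:additivelemma}, because only the exact sub-problem minimum $E(\hat x_S \oplus \bar x_{V \setminus S})$ is directly comparable to the approximate solution via the $\beta$ factor. Once the steps are taken in this order, the $\beta$ naturally distributes over both $E(x^*)$ and $|S|\epsilon$, giving the stated form. An implicit assumption I would flag is that $E$ is non-negative (as usual for MRF multiplicative bounds), so that multiplying by $\beta \ge 1$ preserves the direction of the inequality.
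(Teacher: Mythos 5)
Your proposal is correct and follows essentially the same route as the paper: fix $\hat x_S$, introduce the exact reduced-problem minimizer $\bar x_{V \backslash S}$, apply the $\beta$-multiplicative guarantee, compare to $x^*_{V\backslash S}$ by optimality, and finish with Lemma~\ref{lem:additivelemma} together with $\delta_i \le \bar\delta_i \le \epsilon$, so that the $\beta$ multiplies the $|S|\epsilon$ term. Your remarks on the ordering of the two relaxations and on non-negativity are sensible but do not change the argument.
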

\begin{proof}
Following the proof of Theorem~\ref{lem:peradditivebound}, we have $E(\hat x_S \oplus \hat x_{V \backslash S}) \le \beta\cdot E(\hat x_S \oplus \bar x_{V \backslash S}) \le \beta \cdot E(\hat x_S \oplus x^*_{V \backslash S}) \le \beta\cdot(E(x^*_S \oplus x^*_{V \backslash S}) + |S|\epsilon)$.
\end{proof}

A more careful analysis can give us a tighter bound (dropping the coefficient
$\beta$ before $|S|\epsilon$), for the important special case where we use the
expansion move algorithm \cite{BVZ:PAMI01} for inference. We defer the
proof to the supplementary material. 

\begin{thm}\label{lem:worstmultbound}
Suppose we use expansion moves as the inference algorithm, with the
$\beta$-multiplicative bound, then we will have $E(\hat x) \le \beta\cdot
E(x^*) + |S|\epsilon$. 
\end{thm}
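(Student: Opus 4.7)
The plan is to adapt the proof of the $\beta$-multiplicative bound for expansion moves from \cite{BVZ:PAMI01}, inserting Lemma~\ref{lem:additivelemma} at the right step so that the pre-processing error is absorbed additively rather than through the multiplicative $\beta$ factor. The key observation is that although $\hat x$ is an expansion-move fixed point only in the restricted problem where $\hat x_S$ is pinned, we can compare it to the \emph{unrestricted} $\alpha$-expansion of $\hat x$ toward $x^*$ at an extra cost of at most $\sum_{i \in S : x^*_i = \alpha}\delta_i$.

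Concretely, for each $\alpha \in \calL$ I would define the restricted $\alpha$-expansion $\hat x^{\alpha,S}$ by $\hat x^{\alpha,S}_i = \alpha$ whenever $i \in V \setminus S$ and $x^*_i = \alpha$, and $\hat x^{\alpha,S}_i = \hat x_i$ otherwise; and the unrestricted $\alpha$-expansion $\hat x^{\alpha}$ by $\hat x^{\alpha}_i = \alpha$ whenever $x^*_i = \alpha$ (without the $V \setminus S$ restriction), and $\hat x^\alpha_i = \hat x_i$ otherwise. These labelings agree on $V \setminus S$ and differ only on $\{i \in S : x^*_i = \alpha,\, \hat x_i \ne \alpha\}$. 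Since $\hat x$ is an expansion-move fixed point of the problem with $\hat x_S$ pinned, $E(\hat x) \le E(\hat x^{\alpha,S})$. Applying Lemma~\ref{lem:additivelemma} with $x'_S = \hat x^{\alpha}_S$, and noting that its proof only pays $\delta_i$ for variables actually flipped, gives
\[
E(\hat x) \;\le\; E(\hat x^{\alpha}) + \sum_{i \in S : x^*_i = \alpha}\delta_i \qquad \forall\, \alpha \in \calL.
\]

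From here I would replay the standard BVZ accounting on these $|\calL|$ inequalities: expand $E(\hat x^\alpha)$ by cases on whether $x^*_i = \alpha$ and whether $x^*_i = x^*_j$, sum over $\alpha$, cancel the $\hat x$-only unary and pairwise contributions that appear on both sides, and invoke the metric inequality $\theta_{ij}(x^*_i, \hat x_j) + \theta_{ij}(\hat x_i, x^*_j) \le 2c\,\theta_{ij}(x^*_i, x^*_j)$ on every edge with $x^*_i \ne x^*_j$. The $x^*$-terms collapse into $\beta\,E(x^*)$ and the corrections collect as $\sum_\alpha \sum_{i \in S : x^*_i = \alpha}\delta_i = \sum_{i \in S}\delta_i$, since each $i \in S$ is counted exactly once (for $\alpha = x^*_i$). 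The discriminative criterion used in this section enforces $\delta_i \le \bar\delta_i \le \epsilon$, so $\sum_{i \in S}\delta_i \le |S|\epsilon$, yielding $E(\hat x) \le \beta\,E(x^*) + |S|\epsilon$.

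The main technical point is ensuring that the additive corrections survive the BVZ accounting step without being amplified by $\beta$. This works because each per-$\alpha$ correction depends only on $\hat x_S$ and $x^*_S$ and is independent of the pairwise-term bookkeeping: the metric inequality only multiplies the $x^*$-indexed pairwise sums by $2c$, leaving the correction terms untouched as they pass through the algebra. Once this is verified, the conclusion follows immediately.
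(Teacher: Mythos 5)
Your proof is correct, and it reaches the bound by a genuinely different route than the paper. The paper's argument works at the level of the reduced energy obtained by pinning $\hat x_S$: inspecting the proof of the multiplicative bound in \cite{BVZ:PAMI01} shows the factor $\beta$ is never applied to unary terms, and every pairwise term crossing $S$ and $V\setminus S$ becomes a unary term of the reduced problem once $\hat x_S$ is fixed; this yields one global inequality in which $\beta$ multiplies only the pairwise terms interior to $V\setminus S$, and a single application of the flip argument behind Lemma~\ref{lem:additivelemma} (with background $x^*_{V\setminus S}$ and $\delta_i\le\bar\delta_i\le\epsilon$) then exchanges $\hat x_S$ for $x^*_S$ at total cost $|S|\epsilon$. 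You instead replay the expansion-move accounting per label: you bridge from the restricted $\alpha$-expansion (the only move available in the pinned problem) to the unrestricted one via Lemma~\ref{lem:additivelemma}, paying $\sum_{i\in S:\,x^*_i=\alpha}\delta_i$, cancel the terms on which $\hat x^{\alpha}$ agrees with $\hat x$, sum over $\alpha$, and apply the semi-metric inequality only to the $x^*$-indexed boundary terms, so each $i\in S$ is charged exactly once (at $\alpha=x^*_i$) and the corrections never meet the factor $\beta=2c$. Both proofs rest on the same two ingredients---the internal structure of the argument in \cite{BVZ:PAMI01}, which keeps the corrections additive, and the per-variable bound $\delta_i\le\epsilon$ enforced by the decision rule---but the paper's version is shorter and more modular once the refined statement of the BVZ bound is granted, whereas yours is self-contained, makes the charging of each $\delta_i$ explicit, and in passing gives the slightly sharper per-instance form $E(\hat x)\le\beta\cdot E(x^*)+\sum_{i\in S}\delta_i$. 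The only step to state carefully is the cancellation of the shared $\hat x$-terms before (or, equivalently, after) summing over $\alpha$, which is exactly the standard bookkeeping of \cite{BVZ:PAMI01} and goes through as you describe.
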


\section{Experiments}\label{sec:experiments}
\subsection{Datasets and experimental setup}
\noindent
\textbf{Approaches} The most natural baselines for us to compare against
include inference without pre-processing, and inference using the sound (but
conservative) DEE~\cite{Desmet:Nature92} and PR~\cite{WZ:CVPR16} techniques.
We employ expansion moves for MRF inference~\cite{BVZ:PAMI01}. In order
to achieve better speedup, we apply preprocessing to each induced binary
subproblem of expansion moves as the input of DEE, PR or
Algorithm~\ref{alg:construction}, and then run QPBO \cite{KR:PAMI07}.


At the other end of the spectrum are high overhead
techniques such as Kovtun's approach~\cite{kovtun2003partial,kovtun2004image},
MQPBO~\cite{kohli2008partial}, and LP-based
approaches~\cite{shekhovtsov2014maximum,shekhovtsov2015maximum,swoboda2014partial}.
These algorithms require more running time than max-flow on each
induced binary subproblem. Therefore, we apply them to the multi-label problem, and then
use expansion move to infer the remaining part. We choose the IRI 
method~\cite{shekhovtsov2015maximum} as the representative among 
\cite{shekhovtsov2014maximum,shekhovtsov2015maximum,swoboda2014partial} since
it's significantly faster.
Note that the R$^3$~\cite{alahari2010dynamic} method
also uses Kovtun's method as their pre-processing (reduce) step in order to speed up MRF inference.
The reuse and recycle parts attempt to speed up the inference algorithm itself, which is orthogonal to
what we propose to do in this paper, so we do not compare against this method.

We also compared against other widely used MRF inference algorithms besides
expansion moves, including loopy belief propagation
(LBP)~\cite{Murphy12,WF:TIT01}, dual decomposition
(DD)~\cite{Kappes12:Bundle}, TRWS~\cite{Kolmogorov:TRWS06} and
MPLP~\cite{globerson2008fixing, sontag2012efficiently, sontag2008tightening}.
The comparison among these inference algorithms are provided in survey
papers~\cite{OpenGM:IJCV15,SZSVKATR:PAMI08}. In our experiments, expansion
moves is usually significantly faster than other methods, and gives
comparable or better energy. These experimental comparisons are
deferred to the supplemental material.

\noindent\textbf{Dataset} 
We conducted experiments on a variety of MRF inference benchmarks, where the
energy minimization problems come from different vision problems, including
color segmentation~\cite{lellmann2011continuous}, 
stereo, image inpainting, denoising~\cite{SZSVKATR:PAMI08} and optical flow~\cite{chen2016full}. 
Datasets for the first three tasks are wrapped in OpenGM2~\cite{OpenGM:IJCV15} and are available online.
We use the BSDS300~\cite{Martin:FTM01} for the denoising task with the MRF setup following~\cite{SZSVKATR:PAMI08}.
We use the MPI Sintel dataset~\cite{Butler:ECCV:2012} for the optical flow task with the MRF setup following~\cite{chen2016full}.

Our focus, of course, is on the difficult inference problems where the induced
binary subproblem is non-submodular. For comparison, we also included some
experiments on relatively easy problems where the induced binary subproblem is
submodular.

\noindent\textbf{Measurement} 
We report the improvement in overall running time (including both
pre-processing and the inference for the remaining unlabeled variables) and
relative energy change.\footnote{Our goal is efficient
energy minimization, so speed and energy are the key criterion,
but we also provide visual results comparison in the supplemental material.}
The baseline is expansion moves with no pre-processing.
Let $T_i^{\text{ALG}}$ and $E_i^{\text{ALG}}$ be the
running time and energy for algorithm ALG on the $i$-th instance. We 
define the \textit{speedup} as $T_i^{\Alpha{}} / T_i^{\text{ALG}}$ and \textit{energy change}
$(E_i^{\text{ALG}} - E_i^{\Alpha{}}) / E_i^{\Alpha{}}$ for each instance, and then
report the average speedup and energy change for the whole dataset.

We also report the percentage of labeled variables during the pre-processing.
Since we view the decision problem (whether a given partial labeling is
persistent) as a classification problem, we interchangeably use the term
\textit{percentage of labeled variables} and \textit{recall} value. Getting the precision
value is tricky. Since it's a NP-hard problem so we cannot
have the ground truth label for every variable. However, we apply our
pre-processing technique to the binary subproblems induced from expansion
moves. We know that either max-flow solves the subproblem exactly for the submodular
cases or QPBO can find a sufficiently large subset of partial persistent
labeling for the non-submodular cases (in our experiments, it labels almost all the variables). Therefore,
we report the \textit{precision} value of our method on the subset of the
variables where we know the ground truth labeling.

%

\noindent\textbf{Parameter setup and sensitivity analysis}
The discriminative rule in our approach has a few parameters. 
In order to achieve a fair comparison, we employed
leave-one-out cross-validation (see, e.g. \cite{Murphy12}) to use all but one instances in
the same dataset as the validation set to choose the best parameter\footnote{Based on the 
criterion that we choose the fastest overall running time when the false positive rate is less than $1\%$.}
and test on the remaining instance. We explored all the combinations of 1) threshold 
$\kappa \in \{0.7, 0.8, 0.9\}$, 2) using a uniform distribution or the distribution derived from 
the unary term for our $q(x)$, 3) using Section~\ref{sec:approxsum} to
compute $LB$ on line 4 of Algorithm~\ref{alg:construction} or 
using brute force to compute $\sum_{z_\Ni \in \hat \calL_\Ni(x_i = \ell)} q(z_\Ni)$ exactly, 
4) number
of iterations $\tau \in \{1, 3, 5, 7, 9\}$. We run expansion moves until
convergence or after 5 iterations through 
of the whole label set. We set our worst case bound $\epsilon = \infty$ in Section~\ref{sec:benchmarks}, 
in order to investigate how good our discriminative rule is even without the worst case guarantee.
We further study the role of $\epsilon$ in Section~\ref{sec:epsilon} and supplementary material.

There is evidence that our approach achieves good performance over a wide
range of parameters. We observed that cross validation picked nearly
identical parameters for every instance in the same dataset. 
Using nearby parameters also produced good results.

We also experimented with the following fixed parameters, to avoid the expense
of cross-validation: $\kappa = 0.8,
\tau = 3$, using the uniform distribution for $q(x)$ and checking with
Section~\ref{sec:approxsum}. Note that this is a fairly conservative assumption,
since we use the exact same parameters for very different energy
functions, but still obtain good results. We acheive a 2x-12x speedup
on different datasets with the energy increasing $~0.1\%$ on the worst case. In
addition, we still get lower energy on 4 of the 5 challenging dataset. We
defer the details of our fixed parameter experiments to the supplementary material.

\begin{table*}[!t]
\centering
\caption{ Experimental Results (N/A: not applicable, TO: time out, MEM: out of memory)}
\label{tab:experiments}
\footnotesize
\begin{tabular}{|cc|c|c||c||c|c||c|c|c|}
\hline
&&Dataset & Measurement & Ours & DEE & PR & Kovtun & MQPBO & IRI \\
\hline
\multirow{15}{*}{\rotatebox[origin=c]{90}{\bf Challenging Datasets}} &\hspace{-1em}\multirow{15}{*}{\rotatebox[origin=c]{90}{(non- Potts energy, large $|\calL|$)}} &\bf Stereo & Speedup & \bf 1.78x & 1.06x & 1.13x & N/A & MEM & 0.51x \\
&& 12--20 labels & Energy Change & -0.06\% & 0.00\% & 0.00\% & N/A & MEM & \bf -0.15\% \\
&& Trunc. L1/L2 & Labeled Vars & 44.76\% & 10.07\% & 18.06\% & N/A & MEM & \bf 56.45\% \\
\cline{3-10}
&& \bf Inpainting & Speedup & \bf 3.40x & 1.28x & 1.32x & N/A & MEM & 0.12x \\
&& 256 labels & Energy Change & \bf -1.71\% & 0.00\% & 0.00\% & N/A & MEM & 0.00\% \\
&& Trunc. L2 & Labeled Vars & \bf 74.29\% & 21.05\% & 23.75\% & N/A & MEM & 0.36\% \\
\cline{3-10}
&& \bf Denoising-sq & Speedup & \bf 11.83x & 1.20x & 1.37x & N/A & MEM & 0.29x \\
&& 256 labels & Energy Change & \bf -0.02\% & 0.00\% & 0.00\% & N/A & MEM & 0.00\% \\
&& L2 & Labeled Vars & \bf 97.91\% & 16.54\% & 29.83\% & N/A & MEM & 0.39\% \\
\cline{3-10}
&& \bf Denoising-ts & Speedup & \bf 11.91x & 10.53x & 10.64x & N/A & MEM & 0.18x \\
&& 256 labels & Energy Change & 0.00\% & 0.00\% & 0.00\% & N/A & MEM & \bf -0.03\% \\
&& Trunc. L2 & Labeled Vars & \bf 98.32\% & 95.65\% & 97.69\% & N/A & MEM & 5.85\% \\
\cline{3-10}
&& \bf Optical Flow & Speedup & \bf 4.69x & 2.63 & 3.40x & N/A & MEM & TO \\
&& 225 labels & Energy Change & \bf -0.04\% & 0.00\% & 0.00\% & N/A & MEM & TO \\
&& L1 & Labeled Vars & \bf 77.25\% & 54.34\% & 65.51\% & N/A & MEM & TO \\
\hline
\multirow{6}{*}{\rotatebox[origin=c]{90}{\bf Easy Datasets}} & \hspace{-1em}\multirow{6}{*}{\rotatebox[origin=c]{90}{(Potts, small $|\calL|$)}} & \bf Color-seg-n4 & Speedup & \bf 7.02x & 4.55x & 6.34x & 2.43x & 0.37x & 3.67x\\
&&4--12 labels & Energy Change & 0.00\% & 0.00\% & 0.00\% & 0.00\% & 0.00\% & \bf -0.12\% \\
&& Potts & Labeled Vars & 85.74\% & 65.38\% & 77.50\% & 70.32\% & 17.27\% & \bf 98.44\% \\
\cline{3-10}
&& \bf Color-seg-n8 & Speedup & \bf 8.33x & 5.61x & 6.37x & 2.33x & 0.32x & 1.45x \\
&&4--12 labels & Energy Change & +0.04\% & 0.00\% & 0.00\% & 0.00\% & 0.00\% & \bf -0.10\%\\
&& Potts & Labeled Vars & 90.39\% & 71.62\% & 82.05\% & 70.05\% & 17.87\% & \bf 99.35\% \\
\hline
\end{tabular}
\end{table*}%

\begin{table*}[!t]
\centering
\caption{Precision of our method}\label{tab:precision}
\footnotesize
\begin{tabular}{|c|c|c|c|c|c|c|c|}
\hline
Dataset & \bf Stereo & \bf Inpainting & \bf Denoising-sq & \bf Denoising-ts & \bf Optical Flow & \bf Color-seg-n4 & \bf Color-seg-n8 \\
\hline
Precision & 99.74\% & 96.16\% & 99.95\% & 99.79\% & 99.88\% & 99.79\% & 99.77\% \\
\hline
\end{tabular}
\vspace{-1em}
\end{table*}%

\begin{table}[!t]
\centering
\caption{Precision/recall value v.s. $\kappa$ (P: Precision, R: Recall)}\label{tab:kappa}
\footnotesize
\begin{tabular}{|c|c|c|c|c|c|}
\hline
\multicolumn{2}{|c|}{$\kappa$} & 0.7 & 0.8 & 0.9 & 1.0\\
\hline
\multirow{2}{*}{\bf Stereo} & P & 90.40\% & 99.71\% & 99.41\% & 100.00\% \\
& R & 91.31\% & 56.77\% & 11.35\% & 9.26\% \\
\hline
\multirow{2}{*}{\bf Inpainting} & P & 95.11\% & 99.88\% & 99.96\% & 100.00\% \\
& R & 90.51\% & 47.06\% & 25.97\% & 21.93\% \\
\hline
\multirow{2}{*}{\bf Denoising-sq} & P & 99.66\% & 99.95\% & 99.95\% & 100.00\% \\
& R & 99.47\% & 97.52\% & 19.11\% & 15.15\% \\
\hline
\multirow{2}{*}{\bf Denoising-ts} & P & 99.75\% & 99.95\% & 99.99\% & 100.00\% \\
& R & 98.61\% & 96.65\% & 94.99\% & 94.62\% \\
\hline
\multirow{2}{*}{\bf Optical Flow} & P & 94.01\% & 99.50\% & 99.98\% & 100.00\% \\
& R & 99.27\% & 93.74\% & 60.85\% & 56.79\% \\
\hline
\multirow{2}{*}{\bf Color-seg-n4} & P & 94.77\% & 99.50\% & 99.86\% & 100.00\% \\
& R & 98.52\% & 90.80\% & 77.20\% & 66.65\% \\
\hline
\multirow{2}{*}{\bf Color-seg-n8} & P & 99.48\% & 99.76\% & 99.87\% & 100.00\% \\
& R & 92.84\% & 90.43\% & 86.92\% & 71.66\% \\
\hline
\end{tabular}
\vspace{-1em}
\end{table}%

\begin{figure}[t]
\includegraphics[width=\linewidth]{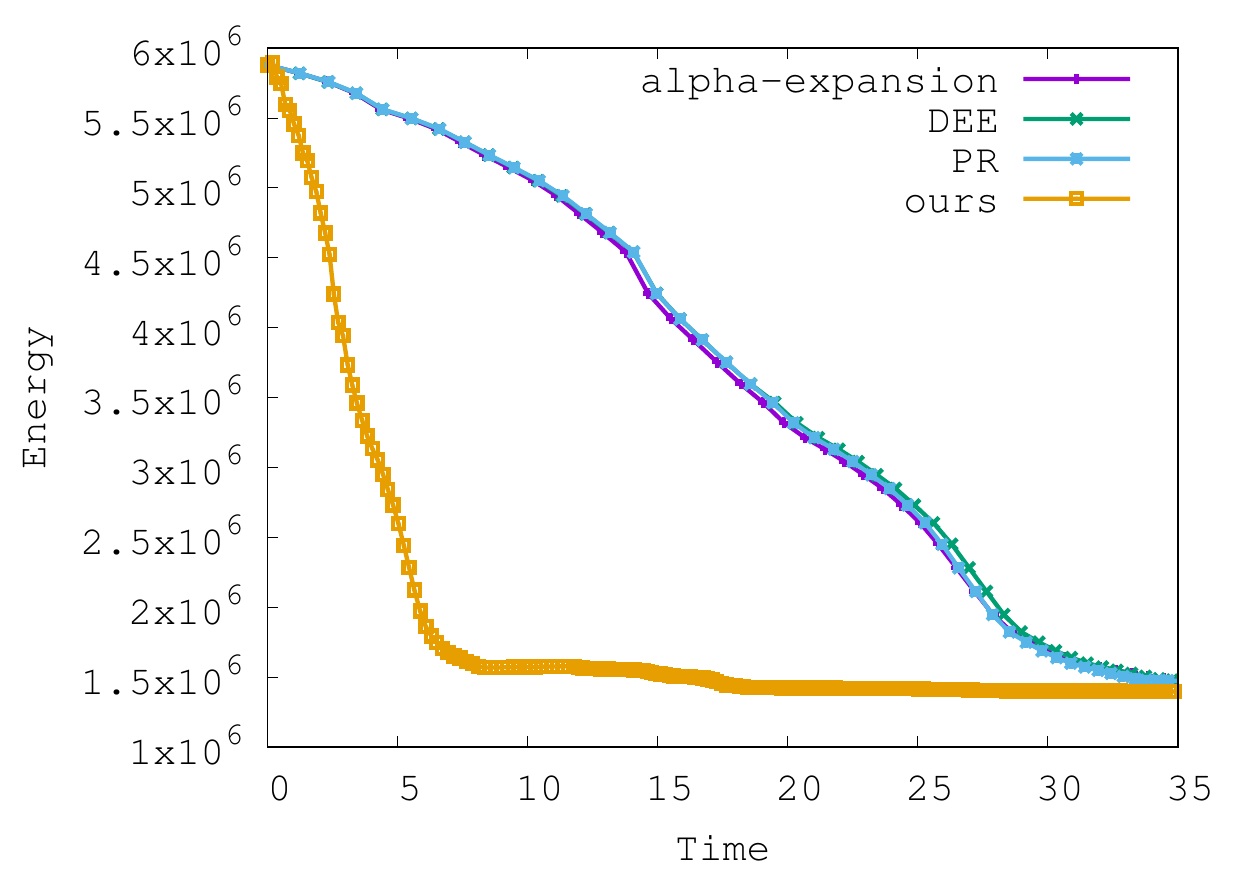}
  \caption{Energy vs time curve on instance \texttt{Ted} in Stereo dataset}\label{fig:speedcurve}
\end{figure}

\subsection{Results on benchmarks}\label{sec:benchmarks}

We summarize our experimental results in Table~\ref{tab:experiments}. Our
primary goal is to speedup MRF inference on hard problems, and there is
evidence that our benchmarks are challenging. The state-of-the-art IRI
method, which delivers impressive performance on the easier problems in our
benchmarks, struggles with the harder problems\footnote{In the table, time out
means we do not obtain results after running for 10x the overall running time
that expansion moves take.} while MPQBO runs out of memory. The only
source code for Kovtun we could find is restricted to the
Potts model.

Our approach achieved a significant improvement, making expansion moves 2x
to 12x faster on various datasets. Our pre-processing method
beats its natural competitor DEE by around 2x, and outperforms all the
baseline methods. Figure~\ref{fig:speedcurve} shows a typical energy vs. time
curve. We can see our approach drives the energy
curve down much faster than the other methods.

The key factor for the speedup is the percentage of labeled variables. The
values of these variables are fixed during the pre-processing step, 
resulting in a smaller problem for max-flow/QPBO to solve. 
Table~\ref{tab:experiments} shows our approach labels significantly
more variables than DEE and PR, especially on the
inpainting and denoising-sq datasets.
Kovtun, MQPBO and IRI have very expensive overhead as the
pre-processing step. While it is impressive that IRI labels almost every
variable on the easy dataset, it is still 2x-6x slower
than our proposed method. Furthermore, Kovtun, MQPBO and IRI do not perform
well on our challenging datasets. When the size of the label set is large
(which is common in many vision problems such as inpainting, denoising or
optical flow), even IRI only proves a few variables to be persistent after
spending 3x-70x as much time as our method. This demonstrates the
advantage of performing pre-processing on the binary subproblem, which is
consistent with the observation in \cite{WZ:CVPR16}. 

Our method also performs well in terms of energy, especially on the hard
benchmarks. Because we can label some variables incorrectly during
pre-processing, there is a risk of producing a larger energy. However, the
experimental results are reassuring: on the hard problems we actually produce
slightly lower energy, while on the easier problems we can produce slightly
higher energy.

While it is somewhat counter-intuitive, occasionally labeling variables
incorrectly can plausibly lead to a better overall energy by getting out of a
local minimum. Expansion moves can be viewed as a local search algorithm
although its search space has an exponential size
\cite{Orlin:VLSN02}. Therefore, a random walk going uphill occasionally may
help us escape from the local minimizer, as in the Metropolis algorithm
\cite{Metropolis53} or simulated annealing \cite{Kirkpatrick83}. At one
iteration of the expansion move algorithm, our method may label some variables
incorrectly and solve the binary subproblem suboptimally (i.e., our
pre-processing may cause the energy to increase during expansion move
framework). It is plausible that this suboptimal move for the binary subproblem may
also help us escape from the local minimizer. To verify this hypothesis, we
experimented with a variant of our method where we reject an expansion move if
it makes the energy worse. In experiments, this change led to a worse final energy. 
This suggests that allowing suboptimal moves is beneficial.

We believe that our method achieves competitive energy due to the very high precision, 
shown in Table~\ref{tab:precision}.
It demonstrates that our discriminative rule described in
Eq.~\ref{eq:decisionrule} is effective and powerful despite being
simple and
intuitive. In general, by compromising precision a little bit, we can
significantly boost the recall value, as illustrated in Table~\ref{tab:kappa}.

In summary, our proposed method achieves a high quality trade-off between
running time and energy among all the methods, particularly on challenging
datasets. It runs significantly faster than its competitors and achieves an
energy that is similar and sometimes even lower.

\subsection{Experiments with parameters and bounds}\label{sec:epsilon}

In Section~\ref{sec:benchmarks}, we set the parameter $\epsilon = \infty$, and
investigated how our algorithm performed without the worst case bound. We
have demonstrated the proposed discriminative rule Eq.~\ref{eq:decisionrule}
itself is empirically effective. All the post-running per-instance bounds we
proved in Section~\ref{sec:perinstancebound} are still sound, although in this
variant of our method there is no worst case theoretical guarantee.

However, if we combine Eq.~\ref{eq:decisionrule} and $\bar \delta_i \le
\epsilon$ as our decision rule, as described in Section~\ref{sec:worstcasebound},
we will have the worst case
bounds. We also conducted experiments with different $\epsilon$'s. Our
observation is that when $\kappa \ge 0.8$, adding the rule $\bar \delta_i \le
\epsilon$ has minimal effects on the speedup and energy we reported in
Table~\ref{tab:experiments}, since our precision is already very high as shown
in Table~\ref{tab:kappa}. However, it gives us a worst case theoretical
guarantee. When $\kappa \le 0.6$, we can observe a noticeable improvement on
precision and energy change when we decrease the $\epsilon$ value with other
parameters fixed. As a special case, we have a sound condition again when
$\epsilon = 0$. In general, decreasing $\epsilon$ increases the running time,
but the tradeoffs involved are not obvious, and we defer details to the
supplementary material.

\vspace{1em}
\noindent\textbf{Acknowledgments:} This research was supported by NSF grants IIS-1447473 and IIS-1161860 and by a
Google Faculty Research Award.
\newpage
\onecolumn

\section{Outline of supplementary material}\label{sec:outline}
We will give a detailed running time analysis of our proposed algorithm in Section~\ref{sec:runningtime}. Then we will give the proof to Lemma~4 and Theorem~10 in Section~\ref{sec:monotonecriterion} and Section~\ref{sec:multiplicativeboundgraphcut} respectively. Generalization of the efficient discriminative criterion check subroutine will be described in Section~\ref{sec:generalization}. More implementation details will be given in Section~\ref{sec:moreimplementation}. Finally, we will provide more experimental data in Section~\ref{sec:moreexperiments}, including visualization results, experimental results on a typical parameter setup, more investigation on parameters sensitivity, the role of worst case bound in practice and preliminary results on multilabel MRFs.

\section{Running time analysis}\label{sec:runningtime}
\RestyleAlgo{boxruled}
\LinesNumbered
\begin{algorithm} 
 \SetAlgoNoLine
 \KwIn{Energy function $E(x)$}
 $\hat x \gets \emptyset$; \quad $S \gets \emptyset$\;
 \For{$t \gets 1$ \textbf{to} $\tau$}{
 	\For{$i \in V \backslash S, \ell \in \calL_i$}{
		Compute $LB \le\sum_{z_\Ni \in \hat \calL_\Ni(x_i = \ell)} q(z_\Ni)$\;
		\If{$LB \ge \kappa$}{
			$\hat x \gets \hat x \oplus \{x_i = \ell\}$\;
			$\calL_i \gets \{\ell\}$; \quad $S \gets S \cup \{i\}$\;
		}
 	}
 }
 With $\hat x_S$ fixed, use one MRF inference algorithm to solve the remaining variables, get $\hat x_{V \backslash S}$\;
 \Return $\hat x = \hat x_S \oplus \hat x_{V \backslash S}$\;
 \caption{MRF inference with pre-processing}\label{alg:construction}
\end{algorithm}

The pseudo-code of our proposed algorithm is listed in Algorithm~\ref{alg:construction}. It's the same pseudo-code we have in the main paper.

We will give a asymptotic analysis on the running time of our pre-processing algorithm here. Assuming we have an oracle to give us data term $\theta_i(x_i)$ and prior term value $\theta_{ij}(x_i, x_j)$ in $\calO(1)$ time. Let $N = |V|, M = |E|$ and $L = \max_{i} |\calL_i|$ to be the number of variables, edges and maximum possible labels, $d = \max_i |\Ni|$ is the maximum degree of the graph. For a typical vision problem, we usually have a sparse graph like grid, meaning $\calO(N) = \calO(M)$ and $d$ is also usually a small constant like 4 or 8. 


Computation time of the for loop from line 2 to 10 needs some thinking. $\tau$ is usually a small constant, so we can omit it in the asymptotic analysis. For the given $x_i = \ell$, a naive implementation of brute force algorithm to compute $\sum_{z_\Ni \in \hat \calL_\Ni} q(z_\Ni)$ needs to enumerate all the possible neighboring configurations $z_\Ni$, and it takes $\calO(dL)$ to compute $\min_{y_i \ne x_i}\energychangedefi$, so it takes $\calO(d L^{d+1})$ time. Therefore, the overall running time is $\calO(dNL^{d+2})$ for brute force so it's still feasible when both d and L are small constant.

When we use the approximated way to compute the lower bound using Lemma~5 in the main paper, we need an faster way to compute Eq.~9. 
We can pre-compute all the terms we may used here in $\calO(NL + EL^2)$ time globally and then query it in $\calO(d)$ time without solving the min operator each time. 
Then it takes $\calO(d^2L)$ time to compute $\calA_j$, $\calO(dL)$ time to compute $Q_i$ and $\calO(d)$ to compute the sum each iteration. 
Also note that once we fix a variable, it also takes $\calO(L+dL^2)$ to update our pre-computations result.
But each variable will only be fixed at most once during the pre-processing, so the amortized running time to update the pre-computations result is $\calO(NL + EL^2)$.
So in sum, we have the overall running time $\calO(d^2NL^2 + EL^2)$ for approximated calculation. 

\setcounter{thm}{3}
\section{Proof of Lemma~\ref{lem:monotonecriterion}}\label{sec:monotonecriterion}
\begin{lem}\label{lem:monotonecriterion}
For the same set of decision problems for persistency, we will never increase
the number of false positives by increasing $\kappa$.  
\end{lem}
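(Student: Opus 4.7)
The plan is to observe that the left-hand side of the discriminative criterion~\eqref{eq:decisionrule} depends only on the partial labeling $x_S$ (through $\hat\calL_{\NS}(x_S)$ and the fixed estimate $q$), and not on $\kappa$. So increasing $\kappa$ only tightens the threshold: any partial labeling that passes the test at a larger $\kappa$ must also pass it at every smaller $\kappa$.

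Concretely, I would fix the underlying energy and the approximate distribution $q$, then for each threshold $\kappa$ define $P(\kappa) = \{x_S : \sum_{z_{\NS} \in \hat\calL_{\NS}(x_S)} q(z_{\NS}) \ge \kappa\}$, i.e., the set of partial labelings that our rule declares persistent. For $\kappa_1 \le \kappa_2$, the inequality $\sum_{z_{\NS} \in \hat\calL_{\NS}(x_S)} q(z_{\NS}) \ge \kappa_2$ implies $\sum_{z_{\NS} \in \hat\calL_{\NS}(x_S)} q(z_{\NS}) \ge \kappa_1$, so $P(\kappa_2) \subseteq P(\kappa_1)$.

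Now let $\calT$ denote the set of truly persistent partial labelings (Definition~\ref{defi:persistency}). The false positives at threshold $\kappa$ are exactly $FP(\kappa) = P(\kappa) \setminus \calT$. Since $P(\kappa_2) \subseteq P(\kappa_1)$ and $\calT$ does not depend on $\kappa$, we get $FP(\kappa_2) \subseteq FP(\kappa_1)$, and in particular $|FP(\kappa_2)| \le |FP(\kappa_1)|$, which is exactly the claim.

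There is no real obstacle here: the lemma is essentially a tautology about thresholding a fixed real-valued score, and the only thing to be careful about is making explicit that the score $\sum_{z_{\NS} \in \hat\calL_{\NS}(x_S)} q(z_{\NS})$ is independent of $\kappa$, so the rule is genuinely a monotone threshold classifier. Given this, the monotonicity of the classified-as-positive set under increasing $\kappa$, and hence of its subset of false positives, follows immediately.
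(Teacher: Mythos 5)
Your proof is correct and follows essentially the same reasoning as the paper's: the score $\sum_{z_{\NS} \in \hat\calL_{\NS}(x_S)} q(z_{\NS})$ is fixed independent of $\kappa$, so any non-persistent labeling passing the test at the larger threshold also passes at the smaller one, making the false-positive sets nested. Your set-inclusion phrasing is just a slight repackaging of the paper's per-instance argument.
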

\begin{proof}
This one is trivial. Consider any non-persistent $x_S$, it will be a false positive with parameter $\kappa_2$ if and only if it meets our discriminative criterion, i.e., $\sum_{z_\NS \in \hat \calL_{\calN(S)}} q(z_\NS) \ge \kappa_2$. Now for the algorithm using parameter $\kappa_2 > \kappa_1$, our discriminative criterion still holds, hence it's still a false positive for our algorithm with parameter $\kappa_1$.
\end{proof}

\setcounter{thm}{9}
\section{Proof of Theorem~\ref{thm:multiplicativeboundgraphcut}}\label{sec:multiplicativeboundgraphcut}
\begin{thm}\label{thm:multiplicativeboundgraphcut}
Suppose we use expansion movess as the inference algorithm, with the
$\beta$-multiplicative bound, then we will have $E(\hat x) \le \beta\cdot
E(x^*) + |S|\epsilon$. 
\end{thm}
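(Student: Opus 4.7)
The plan is to sharpen the proof of the preceding theorem by replacing the bare ``$E \le \beta E^*$'' form of the expansion-move guarantee with the stronger, term-wise form implicit in BVZ's proof. A careful inspection of that proof shows that at every expansion-move local minimum $\hat{y}$, for any reference labeling $y$,
\[
 E(\hat{y}) \le D(y) + \beta\, B(y),
\]
where $D$ and $B$ are the unary and pairwise parts of $E$; the usual multiplicative bound is just this followed by $D(x^*) + \beta B(x^*) \le \beta E(x^*)$. The slack between these two forms is exactly the $(\beta-1)$ factor we need to peel off.

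First, I would apply the sharp bound to the MRF on $V \setminus S$ that Algorithm~\ref{alg:construction} actually hands to expansion moves, namely the subproblem obtained by freezing $\hat{x}_S$. Edges from $V\setminus S$ into $S$ collapse into modified unary terms of that subproblem, so its pairwise part is exactly $B_{V\setminus S}(z_{V\setminus S})$. Rewriting the sharp bound as $E'(\hat{x}_{V\setminus S}) \le E'(z) + (\beta-1)\,B_{V\setminus S}(z)$ for every $z \in \mathcal{L}_{V\setminus S}$, and adding back the $\hat{x}_S$-dependent constant $D_S(\hat{x}_S) + B_S(\hat{x}_S)$ that the subproblem ignores, yields
\[
 E(\hat{x}) \le E(\hat{x}_S \oplus z_{V\setminus S}) + (\beta-1)\,B_{V\setminus S}(z_{V\setminus S}).
\]

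Second, I would specialize $z_{V\setminus S} = x^*_{V\setminus S}$ and invoke Lemma~\ref{lem:additivelemma} with $x'_S = x^*_S$ (which is legal since that lemma permits any $\hat{x}_{V\setminus S}$ and any $x'_S$) to replace $\hat{x}_S$ by $x^*_S$ inside the first term at additive cost $\sum_{i \in S} \delta_i \le |S|\epsilon$. The crucial point is that this additive penalty appears outside the $\beta$ factor: the sharp BVZ bound inflated only the pairwise terms internal to $V\setminus S$, leaving the unary on $S$, pairwise within $S$, and boundary terms unmultiplied, which is precisely the setting in which Lemma~\ref{lem:additivelemma} transports $\hat{x}_S$ to $x^*_S$ without picking up a $\beta$.

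Finally, I would use the standard non-negativity of unary and pairwise costs to dominate $B_{V\setminus S}(x^*_{V\setminus S}) \le E(x^*)$ and conclude
\[
 E(\hat{x}) \le E(x^*) + |S|\epsilon + (\beta-1)E(x^*) = \beta\,E(x^*) + |S|\epsilon.
\]
The main obstacle is justifying the sharp ``$E \le D + \beta B$'' form from the expansion-move analysis, since BVZ state only the multiplicative consequence. The argument is however already latent in their proof: summing the $\alpha$-expansion local-minimality inequalities over all $\alpha$ bounds the unary and pairwise contributions separately, and it is exactly that separation that yields the required term-wise inequality. Once that strengthened local-minimum bound is in hand, the remainder is routine bookkeeping plus one non-negativity step.
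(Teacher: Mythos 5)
Your proposal is correct and follows essentially the same route as the paper's proof: both rest on the observation that the BVZ analysis actually yields a term-wise bound in which $\beta$ multiplies only the pairwise part, treat the edges crossing $S$ and $V\setminus S$ as unary terms of the frozen subproblem so they escape the $\beta$ factor, swap $\hat x_S$ for $x^*_S$ at additive cost $|S|\epsilon$ outside that factor, and absorb the remaining slack into $\beta E(x^*)$ using non-negativity of the terms. The only cosmetic difference is that you package the swap as an explicit invocation of Lemma~\ref{lem:additivelemma} and write the sharp bound in the form $E' \le E' + (\beta-1)B$, whereas the paper carries out the same steps inline over the five-way decomposition of $E(\hat x_S \oplus \hat x_{V\setminus S})$.
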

\begin{proof}
Following the proof of the multiplicative bound of expansion moves algorithm~\cite{BVZ:PAMI01} (Theorem~6.1), we will see actually the multiplicative factor $\beta$ will not be applied to unary terms. In other words, $E'(\hat x) = \sum_i \theta'_i(\hat x_i) + \sum_{ij}\theta'_{ij}(\hat x_i, \hat x_j) \le \sum_i \theta'_i(x^*_i) + \beta\sum_{ij}\theta'_{ij}(x^*_i, x^*_j) \le \beta E'(x^*)$.

Note that in our algorithm, the energy function $E'(x)$ of expansion moves is induced by fixing $\hat x_S$ in $E(x)$, all the pairwise terms $\theta_{ij}$ crossing $S$ and $V \backslash S$ could be viewed as the unary terms in $E'(x)$ since one variable will be fixed. Therefore, we will have following.

\begin{equation}
\begin{aligned}
 &E(\hat x_S \oplus \hat x_{V \backslash S}) \\
= & \sum_{i \in S} \theta_i(\hat x_i) + \!\!\!\!\sum_{i,j \in S, (i, j) \in E} \!\!\!\! \theta_{ij}(\hat x_i, \hat x_j) + \!\!\!\! \sum_{i \in S, j \in V \backslash S, (i, j) \in E}\!\!\!\! \theta_{ij}(\hat x_i, \hat x_j) + \sum_{i \in V \backslash S} \theta_i(\hat x_i) + \!\!\!\!\sum_{i,j \in V \backslash S, (i, j) \in E}\!\!\!\! \theta_{ij}(\hat x_i, \hat x_j)\\
\le & \sum_{i \in S} \theta_i(\hat x_i) + \!\!\!\!\sum_{i,j \in S, (i, j) \in E}\!\!\!\! \theta_{ij}(\hat x_i, \hat x_j) + \!\!\!\!\sum_{i \in S, j \in V \backslash S, (i, j) \in E}\!\!\!\! \theta_{ij}(\hat x_i, x^*_j) + \sum_{i \in V \backslash S} \theta_i(x^*_i) + \beta \!\!\!\!\sum_{i,j \in V \backslash S, (i, j) \in E}\!\!\!\! \theta_{ij}(x^*_i, x^*_j)\\
 \le & \sum_{i \in S} \theta_i(x^*_i) + \!\!\!\!\sum_{i,j \in S, (i, j) \in E}\!\!\!\! \theta_{ij}(x^*_i, x^*_j) + \!\!\!\!\sum_{i \in S, j \in V \backslash S, (i, j) \in E}\!\!\!\! \theta_{ij}(x^*_i, x^*_j) + \sum_{i \in V \backslash S} \theta_i(x^*_i) + \beta \!\!\!\!\sum_{i,j \in V \backslash S, (i, j) \in E}\!\!\!\! \theta_{ij}(x^*_i, x^*_j) + |S|\epsilon\\
  \le &\beta\cdot E(x^*) + |S|\epsilon.
\end{aligned}
\end{equation}
\end{proof}

\section{Generalization of the efficient check of discriminative criterion}\label{sec:generalization}
When we want to decide if the given partial labeling $x_S$ is persistent or not, we can follow exactly the same idea presented in Section~3.3 of the main paper to compute the lower bound of $\sum_{z_\NS \in \hat \calL_{\calN(S)}} q(z_\NS)$. The only big difference is that we need a subroutine to efficiently check $\min_{y_S \ne x_S} \energychangedefNS > 0$ for $z_\NS \in \calL_\NS$ with $z_j = \ell$. Persistency relaxation (PR)~\cite{WZ:CVPR16} generalizes dead end elimination (DEE)~\cite{Desmet:Nature92} from checking persistency of a single variable $x_i$ to an independent local minimum (ILM) partial labeling $x_S$. The subproblem in PR is to decide if $\min_{y_S \ne x_S} \energychangedefNS > 0$ for $z_\NS \in \calL_\NS$, without the additional constraint that $z_j = \ell$, and they proposed a bunch of sufficient conditions to efficiently check it. Actually, it's trivial to enforce the additional constraint $z_j = \ell$ in those approaches. We just need to remove $z_j$ from the free variables and force it takes value $\ell$ in the subroutine proposed in PR. Note that those subroutines are sound so we can still apply Lemma~5 to partial labeling $x_S$ and get the lower bound of $\sum_{z_\NS \in \hat \calL_{\calN(S)}} q(z_\NS)$. Once we have our discriminative criterion as the decision subroutine, we can follow the construction algorithm in PR (Algorithm~2) as the generalization of our proposed construction algorithm in the main paper.

\section{More implementation details}\label{sec:moreimplementation}
Since we applied the proposed method to each induced binary subproblem in the expansion moves algorithm, we only check persistency for $x_i = 0$ (i.e., do not take move in the binary case) after the first epoch of running expansion moves algorithm in order to get the maximum speedup. We observed that after the first epoch, most of the variables won't change its value, hence the extra benefit from checking persistent for $x_i = 1$ is very marginal.

\ignore{
Another implementation detail for generating Table~5 of the main paper. Note that by the nature of the proposed algorithm, it is possible have our discriminative criterion for both $x_i = 0$ and $x_i = 1$ to be true when $\kappa$ is small, since we allow false positives. However, in Algorithm~1 of the main paper, we just fix its value to be 0. Therefore, in order to generate the correct precision, recall values for small $\kappa$ values, we will run the algorithm for two passes. One pass only checks $x_i = 0$ and the other pass only checks $x_i = 1$. Then we will aggregate the results. Note that it won't affect other experiments in the main paper since their $\kappa$ values are high enough so that our discriminative criterion claims $x_i = 0$ to be persistent and $x_i = 1$ to be persistent exclusively.
}

\section{Additional experimental results}\label{sec:moreexperiments}
\subsection{Visualization results}\label{sec:visualization}
We presented the visualization results on the stereo task in Fig.~\ref{fig:stereovis}. We can see there is no significant visual difference between the expansion moves results and our results, even in the case that our method has slightly higher energy. Therefore, it's appealing to apply our method in practice, since it has almost the same visual quality but makes the inference much faster. When we set up a limited time budget in real applications, see the second column of Fig.~\ref{fig:stereovis}, our approach can generate much better visual result than regular expansion moves algorithm without pre-processing. In this case, regular expansion moves even doesn't finish its first epoch and has a very poor disparity map. 

\ignore{
\begin{figure}
        \centering
        \begin{subfigure}[b]{0.33 \linewidth}
            \centering
            \includegraphics[width=\textwidth]{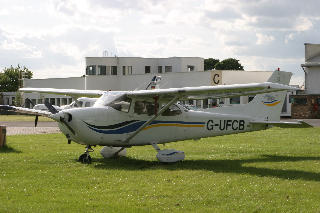}
            \caption{Original image}\label{fig:obj349-img}
        \end{subfigure}
        \begin{subfigure}[b]{0.33 \linewidth}  
            \centering 
            \includegraphics[width=\textwidth]{objseg-349-alpha.png}
            \caption{$\alpha$-expansion ($E=2369.3704$)}\label{fig:obj349-alpha}
        \end{subfigure}
        \begin{subfigure}[b]{0.33 \linewidth}   
            \centering 
            \includegraphics[width=\textwidth]{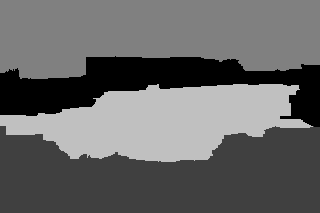}
            \caption{Ours ($E=2457.6725$)}\label{fig:obj349-deeprob}
        \end{subfigure}
        \caption{Segmentation instance \texttt{Obj-349}}\label{fig:obj349}
    \end{figure}
    
    \begin{figure}
        \centering
        \begin{subfigure}[b]{0.33 \linewidth}
            \centering
            \includegraphics[width=\textwidth]{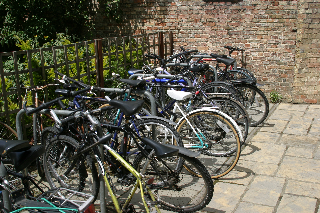}
            \caption{Original image}\label{fig:obj416-img}
        \end{subfigure}
        \begin{subfigure}[b]{0.33 \linewidth}  
            \centering 
            \includegraphics[width=\textwidth]{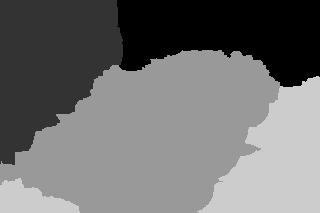}
            \caption{$\alpha$-expansion ($E=38160.7936$)}\label{fig:obj416-alpha}
        \end{subfigure}
        \begin{subfigure}[b]{0.33 \linewidth}   
            \centering 
            \includegraphics[width=\textwidth]{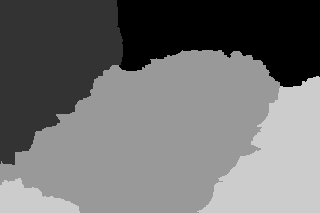}
            \caption{Ours ($E=38168.2210$)}\label{fig:obj416-deeprob}
        \end{subfigure}
        \caption{Segmentation instance \texttt{Obj-416}}\label{fig:obj416}
    \end{figure}
    }

\begin{figure}
        \centering
        \begin{subfigure}[b]{0.33 \linewidth}
            \centering
            \includegraphics[width=\textwidth]{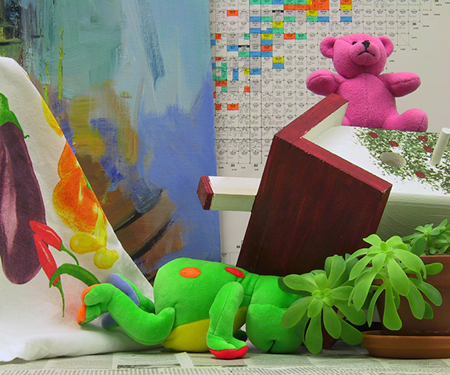}
            \caption{Reference image}\label{fig:stereo-img}
        \end{subfigure}
        \begin{subfigure}[b]{0.33 \linewidth}  
            \centering 
            \includegraphics[width=\textwidth]{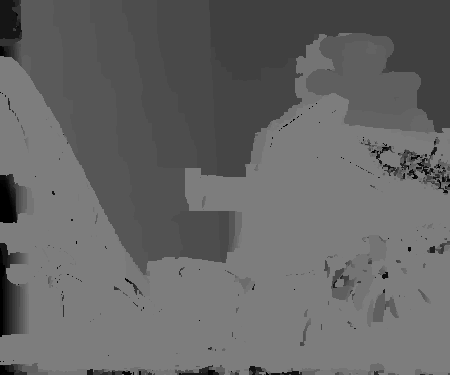}
            \caption{$\alpha$-expansion at 25s ($E = 2731940.0000$)}\label{fig:stereo-alpha-25s}
        \end{subfigure}
        \begin{subfigure}[b]{0.33 \linewidth}   
            \centering 
            \includegraphics[width=\textwidth]{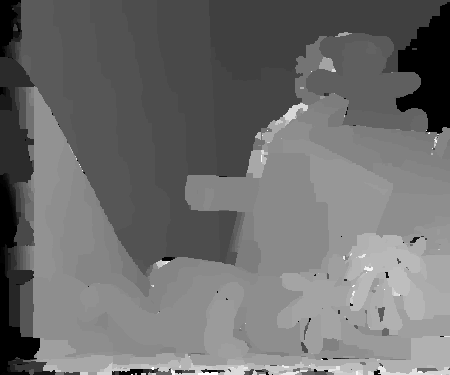}
            \caption{$\alpha$-expansion ($E=1343617.0000$)}\label{fig:stereo-alpha}
        \end{subfigure}
        \begin{subfigure}[b]{0.33 \linewidth}   
            \centering 
            \includegraphics[width=\textwidth]{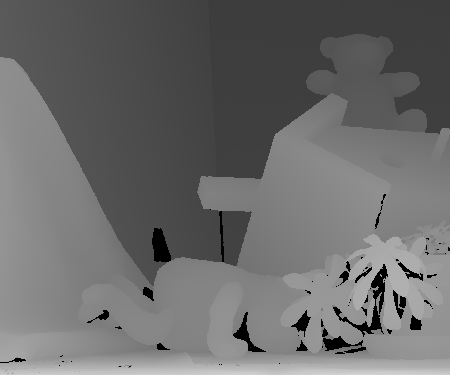}
            \caption{Ground truth}\label{fig:stereo-gt}
        \end{subfigure}
        \begin{subfigure}[b]{0.33 \linewidth}   
            \centering 
            \includegraphics[width=\textwidth]{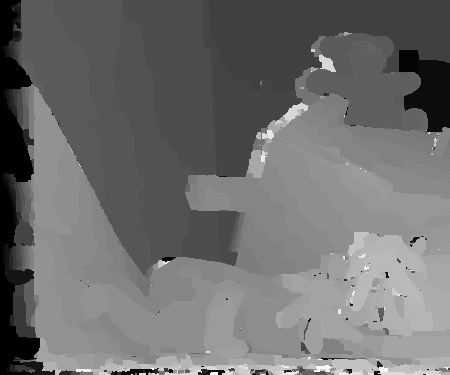}
            \caption{Ours at 25s ($E = 1418450.0000$)}\label{fig:stereo-deeprob-25s}
        \end{subfigure}
        \begin{subfigure}[b]{0.33 \linewidth}   
            \centering 
            \includegraphics[width=\textwidth]{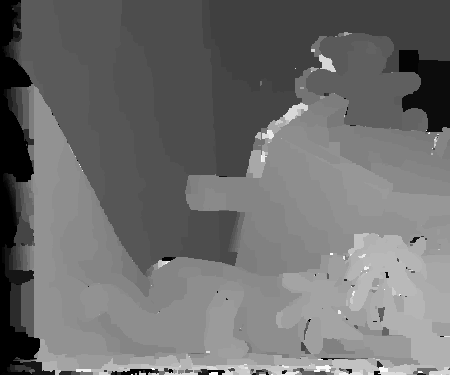}
            \caption{Ours ($E = 1391959.0000$)}\label{fig:stereo-deeprob}
        \end{subfigure}
        \caption{Stereo instance \texttt{Teddy}}\label{fig:stereovis}
    \end{figure}
    
\subsection{Experimental results with a typical parameter setup}\label{sec:typicalparameter}
Our experiments suggest that the proposed method can achieve good performance with
the parameters in a wide range. We report the experimental results in Table~\ref{tab:noloo} 
with the following fixed parameters to avoid the expense of cross-validation: $\kappa = 0.8,
\tau = 3$, using the uniform distribution for $q(x)$ and checking with
our efficient subroutine described in Section~3.3 of the main paper. 

Even though this is a fairly conservative assumption (we use the exact same parameters for very different energy
functions), we still obtain good results.  We acheive a 2x-12x speedup
on different datasets with the energy increasing $~0.1\%$ on the worst case.  In
addition, we still get lower energy on 4 of the 5 challenging dataset. 

We also listed the performance of our method with the parameters selected with the 
leave-one-out cross validation procedure as a reference 
(shown in Table~2 in the main paper).
We see that the performance of our method is very similar no matter whether we use fixed parameters or use cross validation to choose the parameters.
The key observation of the main paper still holds even with this fixed typical parameter
setup, i.e., our method achieves significant speedup against baseline methods with very minor
compromise on the accuracy of the partial optical labelings (usually lose $<0.5\%$ precision).
We also achieve comparable or smaller energy even though we compromise the accuracy of the 
partial optical labelings in the pre-processing step.

Therefore, these experiments demonstrate that it's sufficient to use the typical parameter
setup of our method in practice. We can achieve very good performance without using the expensive
cross validation parameter selection procedure.

\begin{table*}[!t]
\centering
\caption{Performance of our method on a typical parameter setup}\label{tab:noloo}
\small
\begin{tabular}{|c|c|c|c|c|c|c|c|}
\hline
\multicolumn{8}{|c|}{Typical parameter setup (w/o cross validation)}\\
\hline
Dataset & \bf Stereo & \bf Inpainting & \bf Denoising-sq & \bf Denoising-ts & \bf Optical Flow & \bf Color-seg-n4 & \bf Color-seg-n8 \\
\hline
Speedup & 2.14x & 2.10x & 11.71x & 10.61x & 8.92x & 9.31x & 8.45x \\
\hline
Energy Change & -0.04\% & -0.53\% & -0.03\% & -0.09\%& +0.11\% & +0.01\% & +0.05\% \\
\hline
Labeled Vars & 56.77\%& 47.06\%& 97.39\%& 96.64\%& 93.74\% & 90.80\% & 90.43\% \\
\hline
Precision & 99.71\% & 99.88\% & 99.95\% & 99.95\% & 99.50\% & 99.50\% & 99.76\% \\
\hline
\multicolumn{8}{|c|}{Leave one out parameter selection (w/ cross validation)}\\
\hline
Dataset & \bf Stereo & \bf Inpainting & \bf Denoising-sq & \bf Denoising-ts & \bf Optical Flow & \bf Color-seg-n4 & \bf Color-seg-n8 \\
\hline
Speedup & 1.78x & 3.40x & 11.83x & 11.91x & 4.69x & 7.02x & 8.33x \\
\hline
Energy Change & -0.06\% & -1.71\% & -0.02\% & 0.00\%& -0.04\% & 0.00\% & +0.04\% \\
\hline
Labeled Vars & 44.76\%& 74.29\%& 97.91\%& 98.32\%& 77.25\% & 85.74\% & 90.39\% \\
\hline
Precision & 99.74\% & 96.16\% & 99.95\% & 99.79\% & 99.88\% & 99.79\% & 99.77\% \\
\hline
\end{tabular}
\vspace{-1em}
\end{table*}%

\subsection{Investigation on parameter sensitivity}\label{parameters}
We claimed in the main paper that the parameters chosen by the leave-one-out procedure are very similar for the same dataset. We summarized the parameters chosen by cross validation in Table~\ref{tab:parameters}. The exception column shows that, out of the 7 datasets we tested, the leave-one-out procedure only results in 5 cases where the parameters are different from the majority of the dataset. We also observed that the exception instance achieves good performance when applied to the majority parameter setup of the whole dataset. Therefore, we conclude the best parameter suit for one dataset is quite stable, and the parameters chosen from a set of energy can still be applicable to other energy functions derived from the same vision task.

In addition, we also observed that the proposed method achieves good performance across 
all the datasets we tested when the parameters are chosen from a wide range, including the typical
parameter setup we reported in Section~\ref{sec:typicalparameter}. Therefore, the proposed method
is robust to its parameters.

\ignore{
We have already studied the role of $\kappa$ in the main paper. So we will investigate how does the choice of $q$ affect our algorithm. One interesting finding from Table~\ref{tab:parameters} is if we really want to have a good speedup, we just need to choose $q$ as uniform distribution or the distribution derived from unary terms. We don't need the fancy max-product message passing estimation at all, which has expensive running time overhead. It also suggests our algorithm is very robust to the imperfect estimation of the MAP local marginal. However, we still want to learn how does the choice of $q$ may affect our algorithm from other aspects. Therefore, we varied $q$ to be the MAP local marginal estimation from running max-product LBP for $\{0,1,3,5\}$ iterations and summarized the results in Table~\ref{tab:lbp}. Note the running LBP for 0 iteration is just the special case to use the initialization from unary term as $q$ directly. We can see it's obvious that the more iterations we run LBP, the tighter our approximation $q$ is. It can help use get both higher precision and recall. However, considering the extra computational overhead of running LBP, we may not achieve the best speedup when we increase the number of iterations running LBP in practice.
}

\begin{table}[!t]
\begin{center}
\caption{Parameters chosen from the leave-one-out procedure}\label{tab:parameters}
\begin{tabular}{|c|c|c|c|c|c|}
\hline
  Dataset & $\kappa$ & Choice of $q$ & criterion check & Exception\\
\hline
\textbf{Stereo} & 0.8 & uniform & approximate & none\\
\hline
\textbf{Inpainting} & 0.7 & uniform & approximate & 1 instance with unary distribution\\
\hline
\textbf{Denoise-sq} & 0.8 & uniform & approximate & 1 instance with $\kappa = 0.9$\\
\hline
\textbf{Denoise-ts} & 0.7 & uniform & approximate & 1 instance with $\kappa = 0.8$\\
\hline
\textbf{Optical Flow} & 0.9 & unary & exact & 1 instance with $\kappa = 0.8$, approximate check\\
\hline
\textbf{Color-seg-n4} & 0.9 & unary & exact & 1 instance with $\kappa = 0.8$, uniform distribution, approximate check\\
\hline
\textbf{Color-seg-n8} & 0.9 & unary & exact & 1 instance with $\kappa = 0.8$\\
\hline
\end{tabular}
\end{center}
\end{table}

\ignore{
\begin{table}[!t]
\begin{center}
\caption{Experimental results with different iterations of LBP on Stereo dataset}\label{tab:lbp}
\begin{tabular}{|c|c|c|c|c|}
\hline
  \# iterations of LBP & Speedup & Precision & Recall & F1-score \\
\hline
0 & 3.52x  & 0.7657 & 0.6691 & 0.7139\\
\hline
1 & 3.81x & 0.9550 & 0.9326 & 0.9437\\
\hline
3 & 3.03x & 0.9676 & 0.9444 & 0.9559\\
\hline
5 & 2.41x & 0.9734 & 0.9525 & 0.9629\\
\hline
\end{tabular}
\end{center}
\end{table}
}

\subsection{Experimental results for worst case bounds}\label{sec:worstcaseboundexp}
In the main paper, we set $\epsilon = \infty$ to investigate how our algorithm performs without the worst case bound. We demonstrated that our algorithm can achieve very good performance in practice without it. Now we will study the role of $\epsilon$ in practice.

We conducted experiments on Color-seg-n4 dataset as an example. The experimental results are
summarized in Table~\ref{tab:epsilon}.

We firstly applied the typical parameter setup we used in Section~\ref{sec:typicalparameter}. 
The results are reported on the left part of Table~\ref{tab:epsilon}. Note that
$\epsilon = 0$ is the special case where our method only uses the sound condition to check the
partial optimal labeling, hence the proposed algorithm degenerates to the DEE algorithm. Therefore,
in this special case, we have a $100\%$ precision and label around $38\%$ variables, hence we get a 
moderate speedup without affecting the energy. We also know that $\epsilon = \infty$ is another special case
where we don't try to bound the worst case. These results is reported in the main paper and
Table~\ref{tab:noloo}. We already know that the fixed parameters we choose here are reasonable, so even in this 
extreme case, we still get good performance without the theoretical
guarantee. As $\epsilon$ decreases, we know that the criterion used becomes more strict. Therefore we will have
higher precision and less labeled variables. Due to that, 
we label fewer variables, and the speedup we acheive decreases. 
In this setup, since we always maintain the precision value at a extremely
high level, $\epsilon$'s impact on energy change is not that obvious.

To test this, we conducted the experiments under another set of purposely bad parameters, i.e., changed $\kappa = 0.6$. We summarized our results on the right part of Table~\ref{tab:epsilon}.
We see that with $\kappa = 0.6$ and large $\epsilon$ value (e.g., $\epsilon = 10$), our criterion is
loose enough to hurt the precision and result in $8\%$ higher energy than before.
In our experiments, we observed that as $\epsilon$ decreases from 10 down to 0, 
the precision increases
dramatically and the energy increment becomes smaller. Therefore, $\epsilon$ values not only
give us the theoretical worst case guarantee, but also make real impact in practice (make the criterion
we used close to the sound condition and makes the energy smaller).

\begin{table}[!t]
\begin{center}
\caption{Experimental results with different $\epsilon$ on Color-seg-n4 dataset}\label{tab:epsilon}
\begin{tabular}{|c||c|c|c|c||c|c|c|c|}
\hline
& \multicolumn{4}{|c||}{$\kappa = 0.8$} & \multicolumn{4}{|c|}{$\kappa = 0.6$}\\
\hline
  $\epsilon$ & Speedup & Energy Change & Labeled Vars & Precision & Speedup & Energy Change & Labeled Vars & Precision\\
\hline
0 & 4.16x & 0.00\% & 37.82\% & 100.00\% & 4.16x & 0.00\% & 37.82\% & 100.00\% \\
\hline
0.01 & 4.31 & 0.00\% & 67.73\% & 99.99\% & 4.46x & 0.00\% & 68.25\% & 99.97\%\\
\hline
0.1 & 6.05x & 0.00\% & 72.07\% & 99.93\% & 6.47x & +0.01\% & 73.69\% & 99.70\%\\
\hline
0.2 & 6.68x & 0.00\% & 74.67\% & 99.86\% & 7.93x & +0.20\% & 78.32\% & 99.34\%\\
\hline
0.3 & 6.97x & 0.00\% & 76.32\% & 99.81\% & 8.38x & +0.34\% & 81.43\% & 99.12\%\\
\hline
0.4 & 7.07x & 0.00\% & 77.86\% & 99.80\% & 9.62x & +1.29\% & 85.91\% & 98.68\%\\
\hline
0.5 & 7.59x & 0.00\% & 81.16\% & 99.74\% & 11.73x & +3.01\% & 88.41\% & 97.59\%\\
\hline
1.0 & 7.92x & +0.01\% & 88.48\% & 99.69\% & 12.38x & +6.88\% & 96.25\% & 96.51\%\\
\hline
10.0 & 8.12x & +0.01\% & 90.80\% & 99.50\% & 15.02x & +7.83\% & 98.52\% & 94.77\%\\
\hline
\end{tabular}
\end{center}
\end{table}

\subsection{Comparison to other MRF inference algorithm}
The main focus of this paper is to demonstrate that the proposed decision criterion is efficient and
effective in finding a partial optimal labeling of MRFs. We achieve a very good tradeoff between
the running time and the final energy by employing our proposed method as the pre-processing
for the expansion moves algorithm. 

Demonstrating that expansion moves is a state-of-the-art MRF inference algorithm is not the main
goal of this paper. The comparison among different inference algorithms are provided in survey
papers~\cite{OpenGM:IJCV15,SZSVKATR:PAMI08}. However, for the completeness of the paper,
we still perform the experiments
comparing against other widely used MRF inference algorithms besides
expansion movess, including loopy belief propagation
(LBP)~\cite{Murphy12,WF:TIT01}, dual decomposition
(DD)~\cite{Kappes12:Bundle}, TRWS~\cite{Kolmogorov:TRWS06} and
MPLP~\cite{globerson2008fixing, sontag2012efficiently, sontag2008tightening}.
 
The experimental results are reported in Table~\ref{tab:experiments}.
We set the time budget for the baseline methods as the 10x of the running time used by
expansion moves.
In our experiments, expansion movess are usually significantly faster than other methods, 
and results in comparable or even better energy. This observation is consistent with
the survey papers~\cite{OpenGM:IJCV15,SZSVKATR:PAMI08}. We can see that LBP, DD,
and MPLP usually will get higher energy compared to expansion moves even with 10x of time
budget. TRWS is promising since it can provide (slightly) lower energy than expansion moves,
although it's much slower. On the datasets we tested, TRWS will spend 3-10x longer time to get 
energy comparable to our proposed method, through its final energy might be slightly smaller. 
Typical energy-time curves are presented in Fig.~\ref{fig:morespeedenergy}. We
can see that LBP, DD, TRWS are usually much slower than our method with comparable
converging energy.

\begin{table*}[!t]
\centering
\caption{Additional experimental results (TO: time out, MEM: out of memory)}
\label{tab:experiments}
\footnotesize
\begin{tabular}{|cc|c|c||c||c|c|c||c|c|c|c|}
\hline
&&Dataset & Measurement & Ours & DEE & PR & IRI & LBP & DD & TRWS & MPLP\\
\hline
\multirow{15}{*}{\rotatebox[origin=c]{90}{\bf Challenging Datasets}} &\hspace{-1em}\multirow{15}{*}{\rotatebox[origin=c]{90}{(non- Potts energy, large $|\calL|$)}} &\bf Stereo & Speedup & 1.78x & 1.06x & 1.13x & 0.51x & 0.17x & 0.10x & 0.10x & MEM\\
&& 12--20 labels & Energy Change & -0.06\% & 0.00\% & 0.00\% & -0.15\% & +86.55\% & +92.25\% & -0.63\% & MEM\\
&& Trunc. L1/L2 & Labeled Vars & 44.76\% & 10.07\% & 18.06\% & 56.45\% & - & - & - & MEM \\
\cline{3-12}
&& \bf Inpainting & Speedup & 3.40x & 1.28x & 1.32x & 0.12x & 0.10x & 0.10x & 0.10x & MEM\\
&& 256 labels & Energy Change & -1.71\% & 0.00\% & 0.00\% & 0.00\% & +25.94\% & +51.39\% & -9.71\% & MEM\\
&& Trunc. L2 & Labeled Vars & 74.29\% & 21.05\% & 23.75\% & 0.36\% & - & - & - & MEM \\
\cline{3-12}
&& \bf Denoising-sq & Speedup & 12.76x & 1.15x & 1.33x & 0.29x & 0.10x & 0.09x & 0.10x & MEM\\
&& 256 labels & Energy Change & -0.02\% & 0.00\% & 0.00\% & 0.00\% & -0.65\% & +17.14\% & -0.65\% & MEM\\
&& L2 & Labeled Vars & 97.93\% & 17.42\% & 33.71\% & 0.39\% & - & - & - & MEM \\
\cline{3-12}
&& \bf Denoising-ts & Speedup & 13.08x & 11.97x & 11.86x & 0.18x & 0.10x & 0.09x & 0.10x & MEM\\
&& 256 labels & Energy Change & 0.00\% & 0.00\% & 0.00\% & -0.03\% & -0.78\% & +13.29\% & -0.99\% & MEM\\
&& Trunc. L2 & Labeled Vars & 98.22\% & 95.54\% & 97.71\% & 5.85\% & - & - & - & MEM \\
\cline{3-12}
&& \bf Optical Flow & Speedup & 4.69x & 2.63 & 3.40x & TO & 0.10x & 0.09x & 0.10x & MEM\\
&& 225 labels & Energy Change & -0.04\% & 0.00\% & 0.00\% & TO & +9.63\% & +16.07\% & -0.58\% & MEM\\
&& L1 & Labeled Vars & 77.25\% & 54.34\% & 65.51\% & TO & - & - & - & MEM\\
\hline
\multirow{6}{*}{\rotatebox[origin=c]{90}{\bf Easy Datasets}} & \hspace{-1em}\multirow{6}{*}{\rotatebox[origin=c]{90}{(Potts, small $|\calL|$)}} & \bf Color-seg-n4 & Speedup & 7.02x & 4.55x & 6.34x & 3.67x & 0.14x & 0.10x & 0.36x & 0.10x\\
&&4--12 labels & Energy Change & 0.00\% & 0.00\% & 0.00\% & -0.12\% & +1.72\% & +3.17\% & -0.13\% & +0.25\%\\
&& Potts & Labeled Vars & 85.74\% & 65.38\% & 77.50\% & 98.44\% & - & - & - & - \\
\cline{3-12}
&& \bf Color-seg-n8 & Speedup & 8.33x & 5.61x & 6.37x & 1.45x & 0.10x & 0.10x & 0.12x & 0.10x\\
&&4--12 labels & Energy Change & +0.04\% & 0.00\% & 0.00\% & -0.10\% & +0.39\% & +4.49\% & -0.11\% & +0.22\%\\
&& Potts & Labeled Vars & 90.39\% & 71.62\% & 82.05\% & 99.35\% & - & - & - & - \\
\hline
\end{tabular}
\end{table*}%

    \begin{figure}
        \centering
        \begin{subfigure}[b]{0.49 \linewidth}
            \centering
            \includegraphics[width=\textwidth]{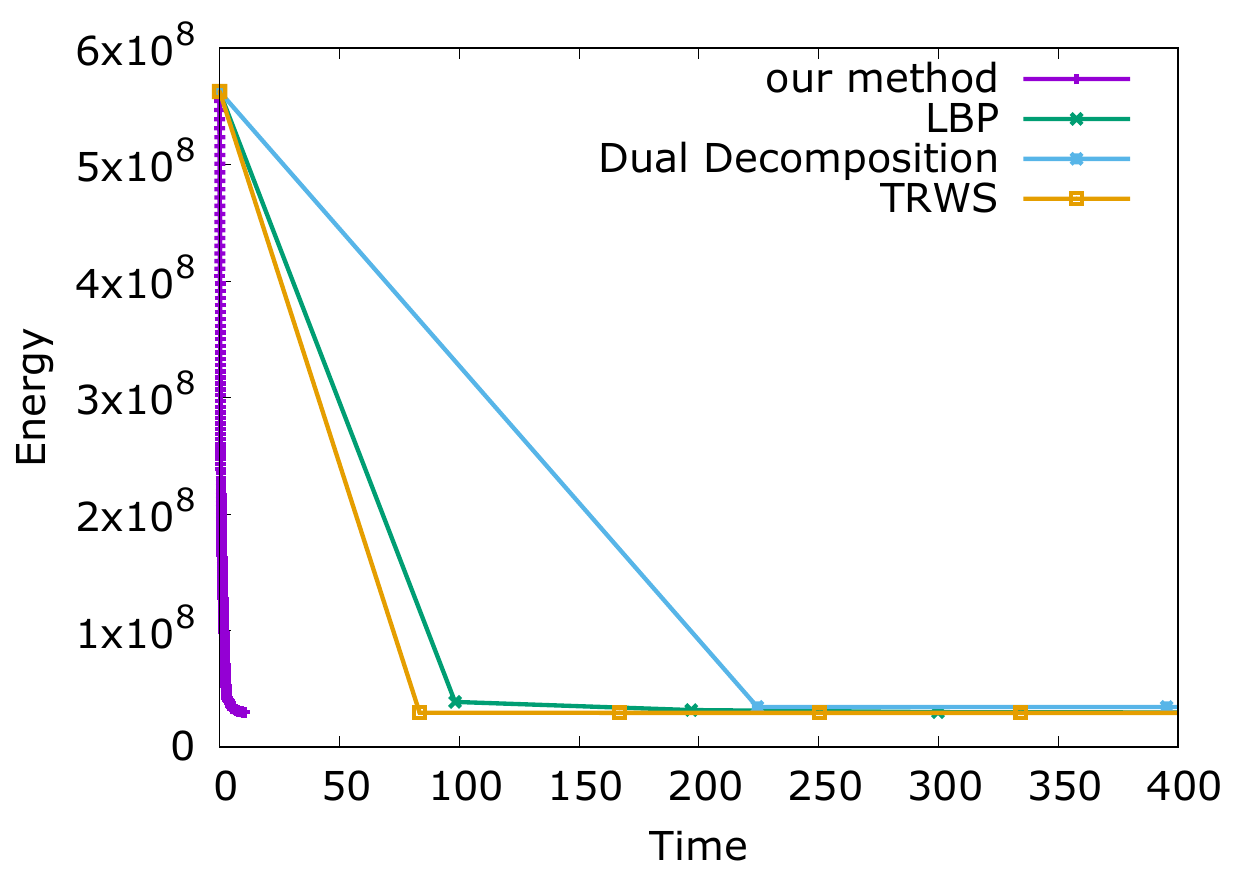}
            \caption{Instance \texttt{21077}}\label{fig:21077-curve}
        \end{subfigure}
        \begin{subfigure}[b]{0.49 \linewidth}  
            \centering 
            \includegraphics[width=\textwidth]{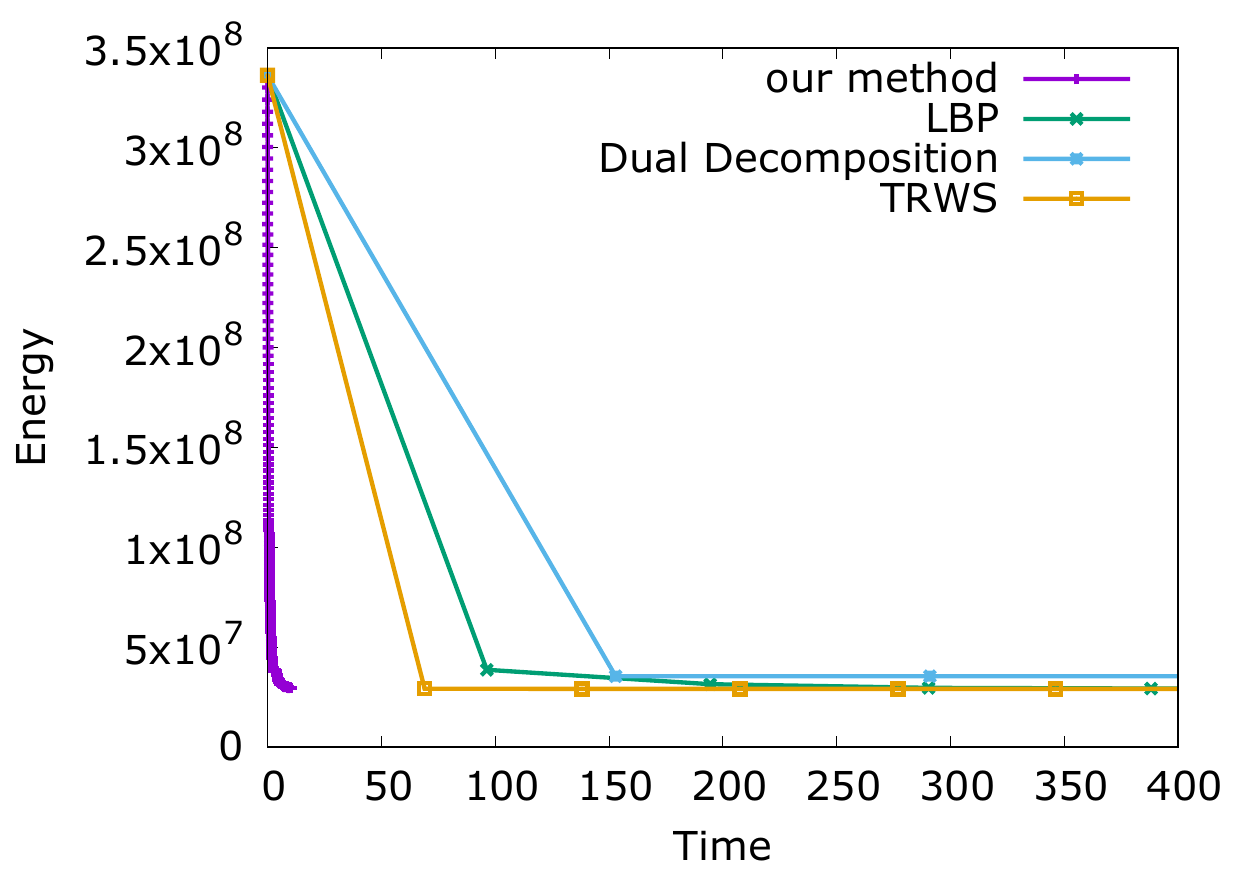}
            \caption{Instance \texttt{86000}}\label{fig:86000-curve}
        \end{subfigure}
        \caption{More speed-energy curve for denoise-sq dataset}\label{fig:morespeedenergy}
    \end{figure}

\ignore{
\subsection{Additional experimental results for speedup}\label{sec:morespeedup}
We presented more results on the speedup in Table~\ref{tab:morespeed}, including PR as the baseline method. We also presented the per-dataset speedup and energy change as well. As we argued in the main paper, both per-instance (used in the main paper) and per-dataset metric has its own advantage. We can see from Table~\ref{tab:morespeed} that our conclusion from the main paper still holds. The proposed method achieves the best tradeoff between false negatives and false positives, hence has the highest f1-score. It's faster than DEE and PR in general and will either only increase the energy by a very small amount or even decrease the energy. We can also see the trend between per-instance metric and per-dataset metric are very similar.

More energy-time curves are presented in Fig.~\ref{fig:morespeedenergy}. We can see the proposed method decreases the energy faster than all baseline methods\footnote{There is no significant difference between the baseline methods, hence the curve are almost identical for them.}.

\begin{table}[!t]
\begin{center}
\caption{Speedup and energy change w.r.t. alpha-expansion (bold numbers are better)}\label{tab:morespeed}
\begin{tabular}{|c|c||c|c||c|c||c|c|c|}
\hline
\multirow{2}{*}{Dataset} & \multirow{2}{*}{Algorithm} & \multicolumn{2}{|c||}{Per-instance} & \multicolumn{2}{|c||}{Per-dataset} & \multirow{2}{*}{Precision} & \multirow{2}{*}{Recall} & \multirow{2}{*}{F1-score}\\
&  & \multicolumn{1}{c}{Speedup} & Energy Change & \multicolumn{1}{c}{Speedup} & Energy Change &  &  &  \\
\hline
 & \DEE{} & 3.60x & 0.0000\% & 2.78x & 0.0000\% & \bf 1.0000 & 0.6373 & 0.7333\\
\bf Color-seg-n4 & \HRP{} & \bf 4.98x & 0.0000\% & \bf 4.01x & 0.0000\% & \bf 1.0000 & 0.7721 & 0.8505\\
\ignore{\it 3-12 labels} & \ProbDEE{} & 4.76x & \bf -0.0062\% & 3.84x & \bf -0.0073\% & 0.9984 & \bf 0.7820 & \bf 0.8707\\
\hline
 & \DEE{} & 4.41x & \bf 0.0000\% & 3.32 & \bf 0.0000\% & \bf 1.0000 & 0.7032 & 0.7836\\
\bf Color-seg-n8 & \HRP{} & 5.19x & \bf 0.0000\% & 4.21x & \bf 0.0000\% & \bf 1.0000 & 0.8096 & 0.8843\\
\ignore{\it 3-12 labels} & \ProbDEE{} & \bf 6.42x & +0.1771\% & \bf 5.93x & +0.1403\% & 0.9942 & \bf 0.9094 & \bf 0.9485 \\
\hline
 & \DEE{} & 0.99x & \bf 0.0000\% & 0.99x & \bf 0.0000\% & \bf 1.0000 & 0.0071 & 0.0141\\
\bf Object-seg& \HRP{} & 0.94x & \bf 0.0000\% & 0.93x & \bf 0.0000\% & \bf 1.0000 & 0.0386 & 0.0739 \\
\ignore{\it 4-8 labels} & \ProbDEE{} & \bf 2.53x & +0.9396\% & \bf 2.47x & +0.2936\% & 0.9986 & \bf 0.8427 & \bf 0.9134\\
\hline
 & \DEE{} & 1.05x & 0.0000\% & 1.10x & 0.0000\% & \bf 1.0000 & 0.0929 & 0.1637\\
\bf Stereo & \HRP{} & 1.14x & 0.0000\% & 1.23x & 0.0000\% & \bf 1.0000 & 0.1915 & 0.2940\\
\ignore{\it 16-60 labels}& \ProbDEE{} & \bf 1.49x & \bf -0.1396\% & \bf 1.72x & \bf -0.1466\% & 0.9945 & \bf 0.4038 & \bf 0.5523\\
\hline
 & \DEE{} & 1.24x & 0.0000\% & 1.12x & 0.0000\% & \bf 1.0000 & 0.2182 & 0.3367\\
\bf Inpainting & \HRP{} & 1.31x & 0.0000\% & 1.12x & 0.0000\% & \bf 1.0000 & 0.2691 & 0.3897\\
\ignore{\it 256 labels} & \ProbDEE{} & \bf 5.28x & \bf -2.9575\% & \bf 5.11x & \bf -5.7633\% & 0.9756 & \bf 0.8420 & \bf 0.9038 \\
\hline
\end{tabular}
\end{center}
\end{table}

    \begin{figure}
        \centering
        \begin{subfigure}[b]{0.49 \linewidth}
            \centering
            \includegraphics[width=\textwidth]{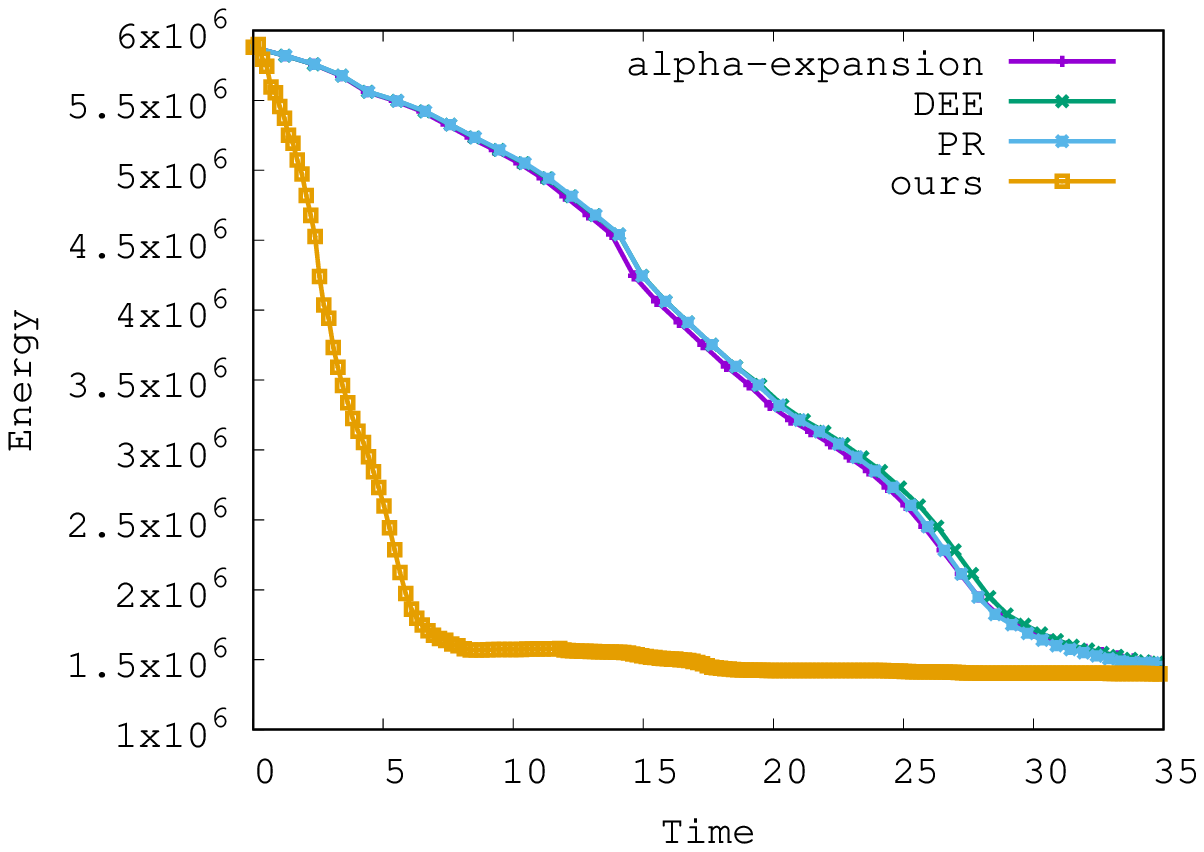}
            \caption{Instance \texttt{Ted}}\label{fig:ted-curve}
        \end{subfigure}
        \begin{subfigure}[b]{0.49 \linewidth}  
            \centering 
            \includegraphics[width=\textwidth]{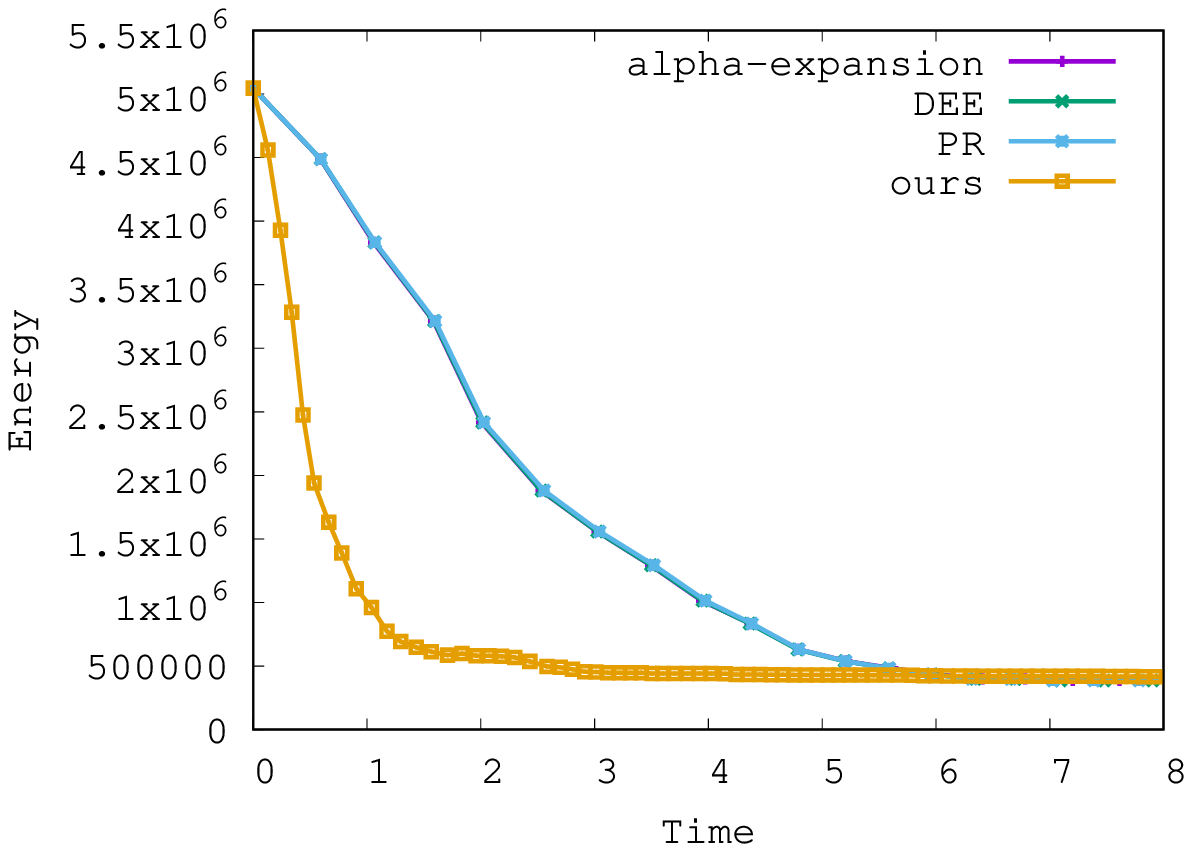}
            \caption{Instance \texttt{Tsu}}\label{fig:tsu-curve}
        \end{subfigure}
        \caption{More speed-energy curve for stereo dataset}\label{fig:morespeedenergy}
    \end{figure}
}

\subsection{Experimental results for multilabel MRFs}\label{sec:multilabel}
In the main paper, we mainly focused on applying the proposed pre-processing technique to each induced binary subproblem from the expansion moves algorithm. We can also apply the proposed pre-processing technique to the multilabel MRFs directly. We give preliminary results in Table~\ref{tab:multilabel}. Since the multilabel MRFs are NP-hard, it's very challenging to get the ground truth persistent labeling for each variable. Therefore, we don't report the precision/recall values. We just use the energy change as an indirect measurement to evaluate the quality of the persistent labeling we found. We also reported the percentage of labeled variables. Both metrics are computed in the per-dataset fashion. We set our distribution $q_i(x_i) = e^{-\theta_i(x_i)}$, which is only from the unary terms and used the fast approximation subroutine to check our discriminative criterion. Then we vary the $\kappa$ value in $\{0.6, 0.7, 0.8, 0.9\}$. We can see from Table~\ref{tab:multilabel} that the proposed method labels significantly more variables than the baseline method DEE, while increases the energy by a couple of percents. It's still the case that the proposed method can achieve a better tradeoff between the number of labeled variables and the energy we can get for the multilabel MRFs. In practice, it's more effective to apply our proposed method to each induced binary subproblem.

\begin{table}[!t]
\begin{center}
\caption{Preliminary experimental results on multilabel MRFs}\label{tab:multilabel}
\begin{tabular}{|c||c|c||c|c|}
\hline
  \multirow{2}{*}{Algorithm} & \multicolumn{2}{|c||}{Color-seg-n4} & \multicolumn{2}{|c|}{Color-seg-n8} \\
 & \multicolumn{1}{c}{\% of labeled variables} & Energy Change & \multicolumn{1}{c}{\% of labeled variables} & Energy Change \\
\hline
\Alpha{} & 0.00\% & 0.0000\% & 0.00\% & 0.0000\% \\
\hline
\DEE{} & 15.62\% & 0.0000\% & 19.67\% & 0.0000\%\\
\hline
\ProbDEE{} with $\kappa = 0.9$ & 25.55\% & +2.3589\% & 29.80\% & +0.1049\%\\
\hline
\ProbDEE{} with $\kappa = 0.8$ & 30.77\% & +3.3480\% & 30.62\% & +0.1161\%\\
\hline
\ProbDEE{} with $\kappa = 0.7$ & 34.01\% & +4.4572\% & 31.05\% & +0.1243\%\\
\hline
\ProbDEE{} with $\kappa = 0.6$ & 44.62\% & +5.7841\% & 31.27\% & +0.1297\%\\
\hline
\end{tabular}
\end{center}
\end{table}

{\small

}


\begin{thebibliography}{10}\itemsep=-1pt

\bibitem{Orlin:VLSN02}
R.~K. Ahuja, {\"O}.~Ergun, J.~B. Orlin, and A.~P. Punnen.
\newblock A survey of very large-scale neighborhood search techniques.
\newblock {\em Discrete Applied Mathematics}, 123(1--3):75--102, 2002.

\bibitem{alahari2010dynamic}
K.~Alahari, P.~Kohli, and P.~H. Torr.
\newblock Dynamic hybrid algorithms for {MAP} inference in discrete {MRF}s.
\newblock {\em TPAMI}, 32(10):1846--1857, 2010.

\bibitem{Boros:DAM02}
E.~Boros and P.~L. Hammer.
\newblock Pseudo-boolean optimization.
\newblock {\em Discrete Applied Mathematics}, 123(1):155--225, 2002.

\bibitem{BVZ:PAMI01}
Y.~Boykov, O.~Veksler, and R.~Zabih.
\newblock Fast approximate energy minimization via graph cuts.
\newblock {\em TPAMI}, 23(11):1222--1239, 2001.

\bibitem{Butler:ECCV:2012}
D.~J. Butler, J.~Wulff, G.~B. Stanley, and M.~J. Black.
\newblock A naturalistic open source movie for optical flow evaluation.
\newblock In {\em ECCV}, pages 611--625, 2012.

\bibitem{chen2016full}
Q.~Chen and V.~Koltun.
\newblock Full flow: Optical flow estimation by global optimization over
  regular grids.
\newblock In {\em CVPR}, pages 4706--4714, 2016.

\bibitem{DavisPutnam:JACM60}
M.~Davis and H.~Putnam.
\newblock A computing procedure for quantification theory.
\newblock {\em J. ACM}, 7(3):201--215, 1960.

\bibitem{Desmet:Nature92}
J.~Desmet, M.~D. Maeyer, B.~Hazes, and I.~Lasters.
\newblock The dead-end elimination theorem and its use in protein side-chain
  positioning.
\newblock {\em Nature}, 356(6369):539--542, 1992.

\bibitem{FZ:PAMI11}
P.~F. Felzenszwalb and R.~Zabih.
\newblock Dynamic programming and graph algorithms in computer vision.
\newblock {\em TPAMI}, 33(4):721--740, 2011.

\bibitem{globerson2008fixing}
A.~Globerson and T.~S. Jaakkola.
\newblock Fixing max-product: Convergent message passing algorithms for {MAP}
  {LP}-relaxations.
\newblock In {\em NIPS}, pages 553--560, 2008.

\bibitem{gould2009alphabet}
S.~Gould, F.~Amat, and D.~Koller.
\newblock Alphabet soup: A framework for approximate energy minimization.
\newblock In {\em CVPR}, pages 903--910, 2009.

\bibitem{Ishikawa:PAMI03}
H.~Ishikawa.
\newblock Exact optimization for {Markov Random Fields} with convex priors.
\newblock {\em TPAMI}, 25(10):1333--1336, 2003.

\bibitem{kahl2012generalized}
F.~Kahl and P.~Strandmark.
\newblock Generalized roof duality.
\newblock {\em Discrete Applied Mathematics}, 160(16):2419--2434, 2012.

\bibitem{OpenGM:IJCV15}
J.~H. Kappes, B.~Andres, F.~A. Hamprecht, C.~Schn{\"o}rr, S.~Nowozin, D.~Batra,
  S.~Kim, T.~Kroeger, B.~X. Kausler, J.~Lellmann, B.~Savchynskyy, N.~Komodakis,
  and C.~Rother.
\newblock A comparative study of modern inference techniques for discrete
  energy minimization problems.
\newblock {\em IJCV}, 115(2):155--184, 2015.

\bibitem{Kappes12:Bundle}
J.~H. Kappes, B.~Savchynskyy, and C.~Schnorr.
\newblock A bundle approach to efficient {MAP}-inference by {Lagrangian}
  relaxation.
\newblock In {\em CVPR}, pages 1688--1695, 2012.

\bibitem{Kirkpatrick83}
S.~Kirkpatrick, C.~Gelatt, and M.~Vecchi.
\newblock Optimization by simulated annealing.
\newblock {\em Science}, 220(4598):671--680, 1983.

\bibitem{KT:JACM02}
J.~Kleinberg and E.~Tardos.
\newblock Approximation algorithms for classification problems with pairwise
  relationships: metric labeling and {Markov Random Fields}.
\newblock {\em J. ACM}, 49(5):616--639, 2002.

\bibitem{kohli2008partial}
P.~Kohli, A.~Shekhovtsov, C.~Rother, V.~Kolmogorov, and P.~Torr.
\newblock On partial optimality in multi-label {MRFs}.
\newblock In {\em ICML}, pages 480--487, 2008.

\bibitem{Kolmogorov:TRWS06}
V.~Kolmogorov.
\newblock Convergent tree-reweighted message passing for energy minimization.
\newblock {\em TPAMI}, 28(10):1568--1583, 2006.

\bibitem{kolmogorov2010generalized}
V.~Kolmogorov.
\newblock Generalized roof duality and bisubmodular functions.
\newblock In {\em NIPS}, pages 1144--1152, 2010.

\bibitem{KR:PAMI07}
V.~Kolmogorov and C.~Rother.
\newblock Minimizing nonsubmodular functions with graph cuts-a review.
\newblock {\em TPAMI}, 29(7):1274--1279, 2007.

\bibitem{KZ:PAMI04}
V.~Kolmogorov and R.~Zabih.
\newblock What energy functions can be minimized via graph cuts?
\newblock {\em TPAMI}, 26(2):147--59, 2004.

\bibitem{Komodakis:PAMI07}
N.~Komodakis and G.~Tziritas.
\newblock Approximate labeling via graph cuts based on linear programming.
\newblock {\em TPAMI}, 29(8):1436--1453, 2007.

\bibitem{kovtun2003partial}
I.~Kovtun.
\newblock Partial optimal labeling search for a {NP-hard} subclass of (max,+)
  problems.
\newblock In {\em Pattern Recognition}, pages 402--409, 2003.

\bibitem{kovtun2004image}
I.~Kovtun.
\newblock Image segmentation based on sufficient conditions of optimality in
  {NP-complete} classes of structural labelling problem.
\newblock {\em Ukrainian. PhD thesis. IRTC ITS National Academy of Sciences,
  Ukraine}, 2004.

\bibitem{lellmann2011continuous}
J.~Lellmann and C.~Schn{\"o}rr.
\newblock Continuous multiclass labeling approaches and algorithms.
\newblock {\em SIAM Journal on Imaging Sciences}, 4(4):1049--1096, 2011.

\bibitem{Lempitsky:TPAMI10}
V.~Lempitsky, C.~Rother, S.~Roth, and A.~Blake.
\newblock Fusion moves for {Markov Random Field} optimization.
\newblock {\em TPAMI}, 32(8):1392--1405, 2010.

\bibitem{Martin:FTM01}
D.~Martin, C.~Fowlkes, D.~Tal, and J.~Malik.
\newblock A database of human segmented natural images and its application to
  evaluating segmentation algorithms and measuring ecological statistics.
\newblock In {\em ICCV}, pages 416-423, 2001.

\bibitem{Weiss:ICCV05}
T.~Meltzer, C.~Yanover, and Y.~Weiss.
\newblock Globally optimal solutions for energy minimization in stereo vision
  using reweighted belief propagation.
\newblock In {\em ICCV}, pages 428-435, 2005.

\bibitem{Metropolis53}
N.~Metropolis, A.~Rosenbluth, M.~Rosenbluth, A.~Teller, and E.~Teller.
\newblock Equations of state calculations by fast computing machines.
\newblock {\em Journal of Chemical Physics}, 21:1087--1091, 1953.

\bibitem{Murphy12}
K.~P. Murphy.
\newblock {\em Machine Learning: A Probabilistic Perspective}.
\newblock The MIT Press, 2012.

\bibitem{radhakrishnan2006dead}
M.~L. Radhakrishnan and S.~L. Su.
\newblock Dead-end elimination as a heuristic for min-cut image segmentation.
\newblock In {\em ICIP}, pages 2429--2432, 2006.

\bibitem{Rother:CVPR07}
C.~Rother, V.~Kolmogorov, V.~Lempitsky, and M.~Szummer.
\newblock Optimizing binary {MRFs} via extended roof duality.
\newblock In {\em CVPR}, pages 1--8, 2007.

\bibitem{Savchynskyy13:combiLP}
B.~Savchynskyy, J.~H. Kappes, P.~Swoboda, and C.~Schn{\"o}rr.
\newblock Global {MAP}-optimality by shrinking the combinatorial search area
  with convex relaxation.
\newblock In {\em NIPS}, pages 1950--1958, 2013.

\bibitem{shekhovtsov2014maximum}
A.~Shekhovtsov.
\newblock Maximum persistency in energy minimization.
\newblock In {\em CVPR}, pages 1162--1169, 2014.

\bibitem{shekhovtsov2015maximum}
A.~Shekhovtsov, P.~Swoboda, and B.~Savchynskyy.
\newblock Maximum persistency via iterative relaxed inference with graphical
  models.
\newblock In {\em CVPR}, pages 521--529, 2015.

\bibitem{sontag2012efficiently}
D.~Sontag, D.~K. Choe, and Y.~Li.
\newblock Efficiently searching for frustrated cycles in {MAP} inference.
\newblock In {\em UAI}, pages 795--804, 2012.

\bibitem{sontag2008tightening}
D.~Sontag, T.~Meltzer, A.~Globerson, T.~Jaakkola, and Y.~Weiss.
\newblock Tightening {LP} relaxations for {MAP} using message passing.
\newblock In {\em UAI}, pages 503--510, 2008.

\bibitem{swoboda2014partial}
P.~Swoboda, B.~Savchynskyy, J.~H. Kappes, and C.~Schnorr.
\newblock Partial optimality by pruning for {MAP}-inference with general
  graphical models.
\newblock In {\em CVPR}, pages 1170--1177, 2014.

\bibitem{SZSVKATR:PAMI08}
R.~Szeliski, R.~Zabih, D.~Scharstein, O.~Veksler, V.~Kolmogorov, A.~Agarwala,
  M.~Tappen, and C.~Rother.
\newblock A comparative study of energy minimization methods for {Markov}
  {Random} {Fields}.
\newblock {\em TPAMI}, 30(6):1068--1080, 2008.

\bibitem{Wainwright:TRW05}
M.~J. Wainwright, T.~S. Jaakkola, and A.~S. Willsky.
\newblock {MAP} estimation via agreement on (hyper)trees: Message-passing and
  linear-programming approaches.
\newblock {\em IEEE Transactions on Information Theory}, 51(11), 2005.

\bibitem{WZ:CVPR16}
C.~Wang and R.~Zabih.
\newblock Relaxation-based preprocessing techniques for {Markov} {Random}
  {Field} inference.
\newblock In {\em CVPR}, pages 5830--5838, 2016.

\bibitem{WF:TIT01}
Y.~Weiss and W.~T. Freeman.
\newblock On the optimality of solutions of the max-product belief-propagation
  algorithm in arbitrary graphs.
\newblock {\em IEEE Transactions on Information Theory}, 47(2):736--744, 2001.

\bibitem{windheuser2012generalized}
T.~Windheuser, H.~Ishikawa, and D.~Cremers.
\newblock Generalized roof duality for multi-label optimization: optimal lower
  bounds and persistency.
\newblock In {\em ECCV}, pages 400--413, 2012.

\end{thebibliography}
\end{document}